\newtheorem{claim}{Claim}
\newtheorem{conjecture}{Conjecture}
\newtheorem{corollary}{Corollary}[section]
\newtheorem{lemma}{Lemma}
\newtheorem{proposition}{Proposition}[section]
\newtheorem{theorem}{Theorem}[section]
\theoremstyle{definition}
\newtheorem{definition}{Definition}
\newtheorem{remark}{Remark}
 \newcommand{\IGNORE}[1]{}
\newcommand{\citep}{\cite}
\newcommand{\citet}{\cite}
\newcommand\supp{\operatorname{supp}}
\newcommand{\Exp}{\mathop{\mathbb E}\displaylimits}
\def\shownotes{1}  
\newcommand{\authnote}[2]{{ $\ll$\textsf{\footnotesize #1 notes: #2}$\gg$}}
\newcommand{\authnote}[2]{}
\newcommand{\calP}{\mathcal{P}}
\newcommand{\calX}{\mathcal{X}}
\newcommand{\calN}{\mathcal{N}}
\newcommand{\htheta}{\hat{\theta}}
\newcommand{\vectheta}{\left.\vec{\theta}\right.}
\newcommand{\vhtheta}{\hat{\vec{\theta}}}
\newcommand{\vecX}{\vec{X}}
\newcommand{\vecPi}{\vec{\Pi}}
\newcommand{\vecY}{\vec{Y}}
\newcommand{\calV}{\mathcal{V}}
\newcommand{\proloss}[3]{R\left((#1,#2),#3\right)}
\newcommand{\prolosspriord}[3]{R_{\mathcal{V}^d}\left((#1,#2),#3\right)}
\newcommand{\prolossprior}[3]{R_{\mathcal{V}}\left((#1,#2),#3\right)}
\newcommand{\brtheta}{\breve{\theta}}
\newcommand{\brX}{\breve{X}}
\newcommand{\brY}{\breve{Y}}
\newcommand{\ic}{\textrm{IC}}
\newcommand{\cc}{\textrm{CC}}
\newcommand{\ignore}[1]{{}}
\begin{document}

\title{On Communication Cost of Distributed Statistical Estimation and Dimensionality}

\author{
Ankit Garg\\
Department of Computer Science, Princeton University\\
\texttt{garg@cs.princeton.edu}
\And 
Tengyu Ma\\
Department of Computer Science, Princeton University\\
\texttt{tengyu@cs.princeton.edu}
\And
Huy L. Nguy$\tilde{\mbox{\^{e}}}$n\\
Simons Institute, UC Berkeley\\
\texttt{hlnguyen@cs.princeton.edu}
}

\maketitle

\begin{abstract}
We explore the connection between dimensionality and communication cost in distributed learning problems. Specifically we study the problem of estimating the mean $\vectheta$ of an unknown $d$ dimensional gaussian distribution in the distributed setting. In this problem, the samples from the unknown distribution are distributed among $m$ different machines. The goal is to estimate the mean $\vectheta$ at the optimal minimax rate while communicating as few bits as possible. We show that in this setting, the communication cost scales linearly in the number of dimensions i.e. one needs to deal with different dimensions individually. Applying this result to previous lower bounds for one dimension in the interactive setting \cite{ZDJW13} and to our improved bounds for the simultaneous setting, we prove new lower bounds of $\Omega(md/\log(m))$ and $\Omega(md)$ for the bits of communication needed to achieve the minimax squared loss, in the interactive and simultaneous settings respectively. To complement, we also demonstrate an interactive protocol achieving the minimax squared loss with $O(md)$ bits of communication, which improves upon the simple simultaneous protocol by a logarithmic factor. Given the strong lower bounds in the general setting, we initiate the study of the distributed parameter estimation problems with structured parameters. Specifically, when the parameter is promised to be $s$-sparse, we show a simple thresholding based protocol that achieves the same squared loss while saving a $d/s$ factor of communication. We conjecture that the tradeoff between communication and squared loss demonstrated by this protocol is essentially optimal up to logarithmic factor. 

\end{abstract}

\section{Introduction}

The last decade has witnessed a tremendous growth in the amount of data involved in machine learning tasks. In many cases, data volume has outgrown the capacity of memory of a single machine and it is increasingly common that learning tasks are performed in a distributed fashion on many machines. 
Communication has emerged as an important resource and sometimes the bottleneck of the whole system. A lot of recent works are devoted to understand how to solve problems distributedly with efficient communication ~\cite{BBFM12, DPSV12a,  DPSV12b, ZDJW13, DBLP:journals/corr/DuchiJWZ14}.

In this paper, we study the relation between the {\em dimensionality} and the communication cost of statistical estimation problems. Most modern statistical problems are characterized by high dimensionality. Thus, it is natural to ask the following meta question: 

\textit{How does the communication cost scale in the dimensionality?}

\noindent

We study this question via the problems of estimating parameters of distributions in the distributed setting. For these problems, we answer the question above by providing two complementary results:

\begin{enumerate}
\item Lower bound for general case: If the distribution is a product distribution over the coordinates, then one essentially needs to estimate each dimension of the parameter individually and the information cost (a proxy for communication cost) scales linearly in the number of dimensions. 
\item Upper bound for sparse case: If the true parameter is promised to have low sparsity, then a very simple thresholding estimator gives better tradeoff between communication cost and mean-square loss. 
\end{enumerate}

Before getting into the ideas behind these results, we first define the problem more formally. We consider the case when there are $m$ machines, each of which receives $n$ i.i.d samples from an unknown distribution $P$ (from a family $\mathcal{P}$) over the $d$-dimensional Euclidean space $\mathbb{R}^d$. These machines need to estimate a parameter $\theta$ of the distribution via communicating with each other. Each machine can do arbitrary computation on its samples and messages that it receives from other machines. We regard communication (the number of bits communicated) as a resource, and therefore we not only want to optimize over the estimation error of the parameters but also the tradeoff between the estimation error and communication cost of the whole procedure. For simplicity, here we are typically interested in achieving the minimax error \footnote{by minimax error we mean the minimum possible error that can be achieved when there is no limit on the communication} while communicating as few bits as possible. Our main focus is the high dimensional setting where $d$ is very large.

\paragraph{Communication Lower Bound via Direct-Sum Theorem}

The key idea for the lower bound is, when the unknown distribution $P = P_1\times \dots \times P_d$ is a product distribution over $\mathbb{R}^d$, 
and each coordinate of the parameter $\theta$ only depends on the corresponding component of $P$, then we can view the $d$-dimensional problem as $d$ independent copies of one dimensional problem. We show that, one unfortunately cannot do anything beyond this trivial decomposition, that is, 
treating each dimension independently, and solving $d$ different estimations problems individually. In other words, the communication cost \footnote{technically, information cost, as discussed below} must be at least $d$ times the cost for one dimensional problem. We call this theorem ``direct-sum'' theorem. 

To demonstrate our theorem, we focus on the specific case where $P$ is a $d$ dimensional spherical Gaussian distribution with an unknown mean and covariance $\sigma^2I_{d}$ \footnote{where $I_d$ denote the $d\times d$ identity matrix} . The problem is to estimate the mean of $P$.  

The work~\cite{ZDJW13} showed a lower bound on the communication cost for this problem when $d=1$. Our technique when applied to their theorem immediately yields a lower bound equal to $d$ times the lower bound for the one dimension problem for any choice of $d$. Note that \cite{DBLP:journals/corr/DuchiJWZ14} independently achieve the same bound by refining the proof in \cite{ZDJW13}.

In the simultaneous communication setting, where all machines send one message to one machine and this machine needs to figure out the estimation, 
the work~\cite{ZDJW13} showed that $\Omega(md/\log m)$ bits of communication are needed to achieve the minimax squared loss. In this paper, we improve this bound to $\Omega(md)$, by providing an improved lower bound for one-dimensional setting and then applying our direct-sum theorem. 

The direct-sum theorem that we prove heavily uses the idea and tools from the recent developments in communication complexity and information complexity. 
There has been a lot of work 
on the paradigm of studying communication complexity via the notion of information complexity~\cite{CSWY01, BJKS04, BR11, BBCR13, BEOPV13}. Information complexity can be thought of as a proxy for communication complexity that is especially accurate for solving multiple copies of the same problem simultaneously~\cite{BR11}. Proving so-called ``direct-sum'' results has become a standard tool, namely the fact that the amount of resources required for solving $d$ copies of a problem (with different inputs) in parallel is equal to $d$ times the amount required for one copy. In other words, there is no saving from solving many copies of the same problem in batch and the trivial solution of solving each of them separately is optimal. Note that this generic statement is certainly NOT true for arbitrary type of tasks and arbitrary type of resources. Actually even for distributed computing tasks, if the measure of resources is the communication cost instead of information cost, there exist examples where solving $d$ copies of a certain problem requires less communication than $d$ times the communication required for one copy~\cite{DBLP:journals/eccc/GanorKR14}. Therefore, a direct-sum theorem, if true, could indeed capture the features and difficulties of the problems. 

Our result can be viewed as a direct sum theorem for communication complexity for statistical estimation problems: the amount of communication needed for solving an estimation problem in $d$ dimensions is at least $d$ times the amount of information needed for the same problem in one dimension. The proof technique is directly inspired by the notion of conditional information complexity~\cite{BJKS04}, which was used to prove direct sum theorems and lower bounds for streaming algorithms. We believe this is a fruitful connection and can lead to more lower bounds in statistical machine learning.

To complement the above lower bounds, we also show an interactive protocol that uses a log factor less communication than the simple protocol, under which each machine sends the sample mean and the center takes the average as the estimation. 
 
Our protocol demonstrates additional power of interactive communication and potential complexity of proving lower bound for interactive protocols. 

\paragraph{Thresholding Algorithm for Sparse Parameter Estimation}

In light of the strong lower bounds in the general case, a question suggests itself as a way to get around the impossibility results: 

\textit{Can we do better when the data (parameters) have more structure?} 

We study this questions by considering the sparsity structure on the parameter $\theta$. Specifically, we consider the case when the underlying parameter $\theta$ is promised to be $s$-sparse.  We provide a simple protocol that achieves the same squared-loss $O(d\sigma^2/(mn))$ as in the general case, while using $\tilde{O}(sm)$ communications, or achieving optimal squared loss $O(s\sigma^2/(mn))$, with communication $\tilde{O}(dm)$, or any tradeoff between these cases. We even conjecture that this is the best tradeoff up to polylogarithmic factors.

\section{Problem Setup, Notations and Preliminaries}\label{sec:prelim}

{\bf Classical Statistical Parameter Estimation} We start by reviewing the classical framework of statistical parameter estimation problems. Let $\calP$ be a family of distributions over $\calX$.  Let $\theta: \calP\rightarrow \Theta\subset \mathbb{R}$ denote a function defined on $\calP$. 
We are given samples $X^1,\dots, X^n$ from some $P\in \calP$, and are asked to estimate $\theta(P)$. Let $\hat{\theta} : \calX^n\rightarrow \Theta$ be such an estimator, and $\hat{\theta}(X^1,\dots, X^n)$ is the corresponding estimate. 

Define the squared loss $R$ of the estimator to be


$$R(\hat{\theta}, \theta) = \Exp_{\hat{\theta}, X}\left[ \|\hat{\theta}(X^1,\dots, X^n) - \theta(P)\|^2_2\right] $$

In the high-dimensional case, let $\calP^d:= \{\vec{P} = P_1\times \dots \times P_d : P_i \in \calP\}$ be the family of product distributions over $\calX^d$. Let $\vectheta: \calP^d \rightarrow \Theta^d \subset \mathbb{R}^d$ be the $d$-dimensional function obtained by applying $\theta$ point-wise $\vectheta(P_1\times \dots \times P_d) = (\theta(P_1),\dots, \theta(P_d))$.

Throughout this paper, we consider the case when $\mathcal{X} = \mathbb{R}$ and $\mathcal{P} = \{\mathcal{N}(\theta, \sigma^2) : \theta \in [-1,1]\}$ is Gaussian distribution with for some fixed and known $\sigma$. Therefore, in the high-dimensional case, $\mathcal{P}^d = \{\mathcal{N}(\vectheta, \sigma^2 I_d) : \vectheta \in [-1,1]^d\}$ is a collection of spherical Gaussian distributions.  We use $\vhtheta$ to denote the $d$-dimensional estimator. For clarity, in this paper, we always use $\vec{\cdot}$ to indicate a vector in high dimensions.

\noindent {\bf Distributed Protocols and Parameter Estimation: }
In this paper, we are interested in the situation where there are $m$ machines  and the $j$th machine receives $n$ samples $\vecX^{(j,1)},\dots, \vecX^{(j,n)}\in \mathbb{R}^d$ from the distribution $\vec{P} = \mathcal{N}(\vectheta, \sigma^2 I_d)$. The machines communicate via a publicly shown blackboard. That is, when a machine writes a message on the blackboard, all other machines can see the content of the message. Following \cite{ZDJW13}, we usually refer to the blackboard as the \textit{fusion center} or simply \textit{center}. Note that this model captures both point-to-point communication as well as broadcast communication. Therefore, our lower bounds in this model apply to both the message passing setting and the broadcast setting. We will say that a protocol is simultaneous if each machine broadcasts a single message based on its input independently of the other machine (\cite{ZDJW13} call such protocols independent). 

We denote the collection of all the messages written on the blackboard by $Y$. We will refer to $Y$ as transcript and note that $Y\in \{0,1\}^*$ is written in bits and the communication cost is defined as the length of $Y$, denoted by $|Y|$.  In multi-machine setting, the estimator $\vhtheta$ only sees the transcript $Y$, and it maps $Y$ to $\vhtheta(Y)$ \footnote{Therefore here $\vhtheta$ maps $\{0,1\}^*$ to $\Theta$}, which is the estimation of $\vectheta$. 
Let letter $j$ be reserved for index of the machine and $k$ for the sample and letter $i$ for the dimension. In other words, $\vecX^{(j,k)}_i$ is the $i$th-coordinate of $k$th sample of machine $j$. We will use $\vecX_i$ as a shorthand for the collection of the $i$th coordinate of all the samples: $\vecX_i = \{\vecX^{(j,k)}_i : j \in [m],k\in [n]\}$. Also note that $[n]$ is a shorthand for $\{1,\dots,n\}$. 

The mean-squared loss of the protocol $\Pi$ with estimator $\vhtheta$ is defined as 
$$\proloss{\Pi}{\vhtheta}{\vectheta} = \sup_{\vectheta }\Exp_{\vecX, \Pi}[\|\vhtheta(Y)-\vectheta\|^2] $$

and the communication cost of $\Pi$ is defined as 

$$\cc(\Pi) = \sup_{\vectheta} \Exp_{\vecX, \Pi}[|Y|]$$

The main goal of this paper is to study the tradeoff between $\proloss{\Pi}{\vhtheta}{\vectheta}$ and $\cc(\Pi)$. 



\noindent{\bf Proving Minimax Lower Bound: } We follow the standard way to prove minimax lower bound. We introduce a (product) distribution $\mathcal{V}^d$ of $\vectheta$ over the $[-1,1]^d$. Let's define the mean-squared loss with respect to distribution $\mathcal{V}^d$ as 
$$R_{\mathcal{V}^d}((\Pi, \vhtheta),\vectheta) = \Exp_{\vectheta\sim \mathcal{V}^d}\left[\Exp_{\vecX, \Pi}[\|\vhtheta(Y)-\vectheta\|^2] \right]$$

It is easy to see that $R_{\mathcal{V}^d}((\Pi, \vhtheta),\vectheta)  \le R((\Pi, \vhtheta),\vectheta) $ for any distribution $\mathcal{V}^d$. Therefore to prove lower bound for the minimax rate, it suffices to prove the lower bound for the mean-squared loss under any distribution $\mathcal{V}^d$. \footnote{Standard minimax theorem says that actually the $\sup_{\mathcal{V}^d} R_{\mathcal{V}^d}((\Pi, \vhtheta),\vectheta)  = R((\Pi, \vhtheta),\vectheta)$ under certain compactness condition for the space of $\vectheta$. }

\noindent {\bf Private/Public Randomness: }
We allow the protocol to use both private and public randomness. Private randomness, denoted by $R_{\text{priv}}$, refers to the random bits that each machine draws by itself. Public randomness, denoted by $R_{\text{pub}}$,  is a sequence of random bits that is shared among all parties before the protocol without being counted toward the total communication. Certainly allowing these two types of randomness only makes our lower bound stronger, and public randomness is actually only introduced for convenience. 

Furthermore, we will see in the proof of Theorem~\ref{direct_sum}, the benefit of allowing private randomness is that we can hide information using private randomness when doing the reduction from one dimension protocol to $d$-dimensional one. 
The downside is that we require a stronger theorem (that tolerates private randomness) for the one dimensional lower bound, which is not a problem in our case since technique in~\cite{ZDJW13} is general enough to handle private randomness.  



\noindent{\bf Information cost: } We define information cost $\ic(\Pi)$ of protocol $\Pi$ as mutual information between the data and the messages communicated conditioned on the mean $\vectheta$.  \footnote{Note that here we have introduced a distribution for the choice of $\vectheta$, and therefore $\vectheta$ is a random variable. }

$$\ic_{\mathcal{V}^d}(\Pi) = I(\vecX;Y\mid\vectheta, R_{\text{pub}})$$

\noindent Private randomness doesn't explicitly appear in the definition of information cost but it affects it. Note that the information cost is a lower bound on the communication cost: 

$$\ic_{\mathcal{V}^d}(\Pi) = I(\vecX;Y\mid\vectheta, R_{\text{pub}})\le H(Y)\le \cc(\Pi)$$


The first inequality uses the fact that $I(U;V\mid W) \le H(V \mid W)\le H(V)$ hold for any random variable $U,V,W$, and the second inequality uses Shannon's source coding theorem \cite{shannon48}. 


We will drop the subscript for the prior $\mathcal{V}^d$ of $\vectheta$ when it is clear from the context.


\section{Main Results}

\subsection{High Dimensional Lower bound via Direct Sum  }\label{sec:main_result_lower}

Our main theorem roughly states that if one can solves the $d$-dimensional problem, then one must be able to solve the one dimensional problem with information cost and square loss reduced by a factor of $d$. Therefore, a lower bound for one dimensional problem will imply a lower bound for high dimensional problem, with information cost and square loss scaled up by a factor of $d$. 

We first define our task formally, and then state the theorem that relates $d$-dimensional task with one-dimensional task. 
\begin{definition}
We say a protocol and estimator pair $(\Pi, \vhtheta)$ solves task $T(d,m,n,\sigma^2, \mathcal{V}^d)$ with information cost $C$ and mean-squared loss $R$, if for $\vectheta$ randomly chosen from $\mathcal{V}^d$,  $m$ machines, each of which takes $n$ samples from $\calN(\vectheta, \sigma^2 I_d)$ as input, can run the protocol $\Pi$ and get transcript $Y$ so that the followings are true: 
\begin{align}
R_{\mathcal{V}^d}((\Pi, \vhtheta),\vectheta) &= R\\
I_{\mathcal{V}^d}(\vecX;Y\mid \vectheta, R_{\text{pub}}) &= C
\end{align}
\end{definition}

\begin{restatable}{theorem}{directsum}\label{direct_sum}[Direct-Sum] If $(\Pi, \vhtheta)$ solves the task $T(d,m,n, \sigma^2, \calV^d)$ with information cost $C$ and squared loss $R$, then there exists $(\Pi',\htheta)$ that solves the task $T(1,m,n, \sigma^2, \calV)$ with information cost at most $4C/d$ and squared loss at most $4R/d$. Furthermore, if the protocol $\Pi$ is simultaneous, then the protocol $\Pi'$ is also simultaneous. 
\end{restatable}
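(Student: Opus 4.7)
The plan is to prove the direct-sum result by the standard embedding (a.k.a. ``simulation'') argument from information complexity: given a $d$-dimensional protocol, we turn it into a $1$-dimensional protocol by planting the $1$-d instance into a random coordinate and filling in the other $d-1$ coordinates with fresh samples drawn using shared public randomness and machine-local private randomness.

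More concretely, for each $i \in [d]$, I will define a $1$-dimensional protocol $\Pi'_i$ as follows. The $1$-d input to machine $j$ is $n$ samples $X^{(j,1)}_i, \dots, X^{(j,n)}_i$ drawn from $\mathcal{N}(\theta, \sigma^2)$ with $\theta \sim \mathcal{V}$. Using public randomness, all machines sample $\theta_{i'} \sim \mathcal{V}$ independently for every $i' \neq i$. Using private randomness, machine $j$ then draws $X^{(j,k)}_{i'} \sim \mathcal{N}(\theta_{i'}, \sigma^2)$ for each $i' \neq i$ and each $k \in [n]$. Stacking these with its real $i$-th coordinate samples produces a valid sample from $\mathcal{N}(\vectheta, \sigma^2 I_d)$ with $\vectheta \sim \mathcal{V}^d$. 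The machines now run $\Pi$ on this constructed input, share the transcript $Y$, and the estimator outputs $\htheta = \vhtheta(Y)_i$, the $i$-th coordinate of the $d$-dimensional estimate. Note that if $\Pi$ is simultaneous then $\Pi'_i$ is too, since each machine fabricates its extra coordinates independently.

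The two key calculations are averaging over $i$. For squared loss, by construction
\[
\sum_{i=1}^d R_{\mathcal{V}}\bigl((\Pi'_i, \htheta),\theta\bigr) = \Exp_{\vectheta, \vec X, \Pi}\!\left[\sum_{i=1}^d (\vhtheta(Y)_i - \theta_i)^2\right] = R,
\]
so the average $1$-d loss is $R/d$. For information cost, the public randomness of $\Pi'_i$ is $R'_{\text{pub}} = (R_{\text{pub}}, \{\theta_{i'}\}_{i'\ne i})$ and its information cost equals $I(X_i;Y\mid \vectheta, R_{\text{pub}})$ (conditioning on the planted $\theta_i$ and the publicly drawn other $\theta_{i'}$ is the same as conditioning on all of $\vectheta$). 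Because, given $\vectheta$ and $R_{\text{pub}}$, the coordinate-blocks $X_1,\dots,X_d$ are mutually independent, we have $H(\vec X \mid \vectheta, R_{\text{pub}}) = \sum_i H(X_i \mid \vectheta, R_{\text{pub}})$, while subadditivity of entropy gives $H(\vec X \mid Y,\vectheta, R_{\text{pub}}) \le \sum_i H(X_i \mid Y,\vectheta, R_{\text{pub}})$. Subtracting yields
\[
\sum_{i=1}^d I(X_i;Y\mid \vectheta, R_{\text{pub}}) \le I(\vec X;Y\mid \vectheta, R_{\text{pub}}) = C,
\]
so the average $1$-d information cost is at most $C/d$.

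Finally, a Markov argument picks a good coordinate: since the average loss is $R/d$ and the average information cost is $C/d$, fewer than a quarter of the indices $i$ have loss exceeding $4R/d$, and fewer than a quarter have information cost exceeding $4C/d$, so some index $i^\ast$ satisfies both bounds simultaneously. Setting $\Pi' := \Pi'_{i^\ast}$ finishes the proof. I expect the main obstacle to be bookkeeping for the information-cost identity: one has to be careful that the private randomness used to fabricate the extra samples does not inflate the mutual information, which is precisely why private randomness (rather than public) is used for those $X^{(j,k)}_{i'}$ and why the $1$-d lower bound we eventually invoke must itself tolerate private randomness.
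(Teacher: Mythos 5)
Your proposal matches the paper's proof essentially step for step: you embed the one-dimensional problem into a fixed coordinate $i$ of $\Pi$, sample the remaining means publicly and the remaining data privately, output coordinate $i$ of $\vhtheta$, prove the exact same loss identity $\sum_i R_{\mathcal{V}}((\Pi_i,\htheta_i),\theta)=R$ and the superadditivity $\sum_i \ic(\Pi_i)\le C$, and finish with a Markov-argument union bound to pick a coordinate satisfying both $4R/d$ and $4C/d$. The only cosmetic difference is that you derive the information-cost inequality via subadditivity of conditional entropy while the paper's Proposition A.1 uses the chain rule plus ``conditioning decreases entropy'' --- these are equivalent; your bookkeeping of the public randomness $R'_{\text{pub}}=(R_{\text{pub}},\{\theta_{i'}\}_{i'\ne i})$ is in fact slightly more explicit than the paper's.
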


\begin{remark}
Note that this theorem doesn't prove directly that communication cost scales linearly with the dimension, but only information cost. 
However for many natural problems, communication cost and information cost are similar for one dimension (e.g. for gaussian mean estimation) and then this direct sum theorem can be applied. In this sense it is very generic tool and is widely used in communication complexity and streaming algorithms literature. 
\end{remark}

\begin{restatable}{corollary}{dimesionlb}\label{dimension_lb} Suppose $(\Pi,\vhtheta)$ estimates the mean of $\calN(\vectheta, \sigma^2 I_d)$, for all $\vectheta \in [-1,1]^d$, with 
mean-squared loss $R$, and  communication cost $B$. Then  
$$R \ge \Omega\left(\min \left\{\frac{d^2 \sigma^2 }{n B \log m}, \frac{d\sigma^2}{n \log m}, d\right\}\right)$$

\noindent As a corollary, when $\sigma^2 \le mn$, to achieve the mean-squared loss $R = \frac{d \sigma^2}{m n}$, the communication cost $B$ is at least $\Omega\left(\frac{d m}{\log m}\right)$. 
\end{restatable}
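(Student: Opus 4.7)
The plan is a two-step reduction: first I would apply Theorem~\ref{direct_sum} to compress the $d$-dimensional task to a one-dimensional instance, and then invoke the one-dimensional lower bound of \cite{ZDJW13} on the resulting protocol. Concretely, I would pick the product prior $\mathcal{V}^d = \mathcal{V}^{\otimes d}$ on $[-1,1]^d$, where $\mathcal{V}$ is the one-dimensional hard prior for which the \cite{ZDJW13} bound is tight. The hypothesis of the corollary furnishes $(\Pi, \vhtheta)$ with worst-case squared loss at most $R$ and communication cost at most $B$; in particular $R_{\mathcal{V}^d}((\Pi,\vhtheta),\vectheta) \le R$, and the information cost $C := \ic_{\mathcal{V}^d}(\Pi)$ is at most $B$ by the inequality $\ic(\Pi) \le \cc(\Pi)$ proved in the preliminaries.

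Next, Theorem~\ref{direct_sum} produces a pair $(\Pi', \htheta)$ solving the one-dimensional task $T(1,m,n,\sigma^2,\mathcal{V})$ with information cost at most $4C/d \le 4B/d$ and squared loss at most $4R/d$. I would then feed $(\Pi', \htheta)$ into the one-dimensional lower bound of \cite{ZDJW13}, which applies to protocols using both public and private randomness and states that for Gaussian mean estimation with information cost $c$ the squared loss is at least $\Omega\bigl(\min\{\sigma^2/(nc\log m),\, \sigma^2/(n\log m),\, 1\}\bigr)$. Substituting $c = 4B/d$ into this bound for $(\Pi',\htheta)$, I obtain $4R/d \ge \Omega\bigl(\min\{d\sigma^2/(nB\log m),\,\sigma^2/(n\log m),\,1\}\bigr)$, and multiplying through by $d/4$ yields
$$R \ge \Omega\!\left(\min\!\left\{\frac{d^2\sigma^2}{nB\log m},\;\frac{d\sigma^2}{n\log m},\;d\right\}\right),$$
which is the claimed bound.

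To extract the corollary, I substitute the target loss $R = d\sigma^2/(mn)$. Under the assumptions $\sigma^2 \le mn$ and $m \ge \log m$, both of the latter two terms in the minimum exceed $R$, so the binding term must be the first one, giving $d\sigma^2/(mn) \ge \Omega(d^2\sigma^2/(nB\log m))$ and hence $B \ge \Omega(dm/\log m)$.

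The main obstacle is absorbed into Theorem~\ref{direct_sum} itself; the remainder is a clean plug-in. The one non-automatic point is confirming that the \cite{ZDJW13} lower bound is quoted in the information-cost formulation and tolerates private randomness, since the output of the direct sum is a protocol whose randomness has been split into a public part and a private part. As emphasized in the Private/Public Randomness discussion of Section~\ref{sec:prelim}, the \cite{ZDJW13} technique is robust enough to accommodate this, so no additional work is needed.
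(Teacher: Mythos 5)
Your proposal is correct and follows the paper's own route exactly: reduce via the direct-sum theorem to a one-dimensional protocol whose information cost and loss are each compressed by a factor of $d/4$, then invoke the ZDJW13 one-dimensional lower bound and plug in. The one small elision is that the paper's Theorem~\ref{one_copy} is stated as an either/or dichotomy parametrized by a free $\delta$ with the admissibility constraint $\delta^2\le\min\left(1,\sigma^2\log m/n\right)$, so obtaining the three-term minimum you quote requires choosing $\delta$ as a function of $B$ and verifying that constraint holds --- precisely the explicit case split on whether $B \gtrless \frac{1}{c}\max\left(\frac{d\sigma^2}{n\log m},\frac{d}{\log^2 m}\right)$ that occupies the body of the paper's proof of Corollary~\ref{dimension_lb}.
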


 This lower bound is tight up to polylogarithmic factors. In most of the cases, roughly $B/m$ machines sending their sample mean to the fusion center and $\vhtheta$ simply outputs the mean of the sample means with $O(\log m)$ bits of precision will match the lower bound up to a multiplicative $\log^2 m$ factor. \footnote{When $\sigma$ is very large, when $\theta$ is known to be in $[-1,1]$, $\vhtheta = 0$ is a better estimator, that is essentially why the lower bounds not only have the first term we desired but also the other two. }



\subsection{Protocol for sparse estimation problem}

In this section we consider the class of gaussian distributions with sparse mean: $\mathcal{P}_s = \{\mathcal{N}(\vectheta, \sigma^2I_d) : |\vectheta|_0 \le s, \vectheta \in \mathbb{R}^d \}$. We provide a protocol that exploits the sparse structure of $\vectheta$. 

\begin{Protocol}
\textbf{Inputs} : Machine $j$ gets samples $X^{(j,1)},\dots, X^{(j,n)}$ distributed according to $\calN(\vectheta, \sigma^2I_d)$, where $\vectheta \in \mathbb{R}^d $ with $|\vectheta|_0 \le s$. \\


For each $1\le j \le m' = (Lm\log d)/\alpha$, (where $L$ is a sufficiently large constant),  machine $j$ sends its sample mean $\bar{X}^{(j)} = \frac{1}{n}\left(X^{(j,1)},\dots, X^{(j,n)}\right)$ (with precision $O(\log m)$) to the center. \\
Fusion center calculates the mean of the sample means $\bar{X} = \frac{1}{m'}\left(\bar{X}^{(1)}  + \dots + \bar{X}^{(m')}\right)$. \\
Let $\vhtheta_i = \left\{\begin{array}{ll}
\bar{X}_i & \textrm{ if } |\bar{X}_i|^2 \ge \frac{\alpha\sigma^2}{mn} \\
0 & \textrm{ otherwise}
\end{array}\right.
$\\

\vspace{0.05in}
\textbf{Outputs} $\vhtheta$
\caption{Protocol for $\mathcal{P}_s$}\label{pro:sparse}
\end{Protocol}

\begin{theorem}\label{thm:sparse-upper}
For any $P \in \mathcal{P}_s$, for any $d/s \ge \alpha \ge 1$,  Protocol~\ref{pro:sparse} returns $\vectheta$ with mean-squared loss $O(\frac{\alpha s\sigma^2}{mn})$ with communication cost $O((dm\log m\log d)\alpha)$. 
\end{theorem}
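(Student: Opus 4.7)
The plan is to analyze the mean-squared loss coordinate by coordinate, treating zero coordinates of $\vectheta$ and nonzero coordinates separately, then sum. First I reduce to the continuous model: each sample mean is transmitted with $O(\log m)$ bits of precision, introducing per-coordinate additive error at most $\poly(1/m)$, hence at most $d/\poly(m)$ overall squared loss, which I absorb into lower-order terms. Writing $\tau^2 := \sigma^2/(m'n) = \alpha\sigma^2/(Lmn\log d)$, the continuous $\bar X_i$ is distributed as $\mathcal N(\theta_i,\tau^2)$ and the threshold satisfies $T^2 := \alpha\sigma^2/(mn) = (L\log d)\tau^2$. The target squared loss is $O(sT^2) = O(\alpha s\sigma^2/(mn))$.

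For $i\notin \supp(\vectheta)$, since $\bar X_i/\tau$ is standard Gaussian, $\Pr[|\bar X_i|^2 \ge T^2] \le 2d^{-L/2}$, and a standard Gaussian tail integration gives $\Exp[\bar X_i^2\,\mathbf 1\{|\bar X_i|^2\ge T^2\}] = O(T^2 d^{-L/2})$. Summing over the at most $d$ zero coordinates contributes $O(T^2 d^{1-L/2})$, which is $o(T^2)$ for $L$ a sufficiently large absolute constant. For $i\in \supp(\vectheta)$ I would split on signal strength: if $\theta_i^2 \le 4T^2$, the loss is at most $\max(\theta_i^2,\tau^2) = O(T^2)$ whether or not the threshold is passed; if $\theta_i^2 > 4T^2$, Gaussian concentration gives $\Pr[|\bar X_i|<T] \le \Pr[|\bar X_i-\theta_i|>|\theta_i|/2] \le 2\exp(-\theta_i^2/(8\tau^2)) \le 2d^{-L/2}$, so the rare failure contributes at most $\theta_i^2\cdot d^{-L/2} \le d^{-L/2}$, while the typical success contributes $\Exp[(\bar X_i-\theta_i)^2] = \tau^2 \ll T^2$. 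Summed over the $\le s$ nonzero coordinates, this yields $O(sT^2)$, matching the claim.

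Finally, the communication cost is read off directly from the protocol: $m' = (Lm\log d)/\alpha$ machines each broadcast $d$ coordinates at $O(\log m)$ bits of precision, totaling $O(dm\log m\log d/\alpha)$ bits. The main point requiring care is choosing the constant $L$ large enough that both the false-positive tail (zero coordinates spuriously exceeding the threshold) and the false-negative tail (large-signal coordinates missed by the threshold) decay as $d^{-\Omega(L)}$; once $L$ is a sufficiently large absolute constant, summing $d$ false-positive contributions of size $O(T^2)$ and $s$ false-negative contributions of size $O(1)$ both collapse below $O(sT^2)$. Modulo this balancing of constants, the argument reduces to a sequence of one-dimensional Gaussian tail computations.
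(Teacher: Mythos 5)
Your decomposition---reduce the rounding error, analyze coordinate-by-coordinate, and split the nonzero coordinates into ``small signal'' ($\theta_i^2 \le 4T^2$) and ``large signal'' ($\theta_i^2 > 4T^2$)---is the same basic approach the paper takes, and your treatment of the rounding precision, the zero coordinates, and the small-signal coordinates is fine. The gap is in the large-signal case. You pass from $\Pr[|\bar{X}_i| < T] \le 2e^{-\theta_i^2/(8\tau^2)}$ to the coarser $\le 2d^{-L/2}$, then multiply by $\theta_i^2$ and assert $\theta_i^2 \cdot d^{-L/2} \le d^{-L/2}$. This has two problems. First, $\mathcal{P}_s$ only requires $\vectheta \in \mathbb{R}^d$ with $|\vectheta|_0 \le s$, so $\theta_i$ is not bounded by $1$ and the inequality $\theta_i^2 \cdot d^{-L/2} \le d^{-L/2}$ fails. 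Second, even granting $\theta_i^2 \le 1$, you would still need $d^{-L/2} \le T^2 = \alpha\sigma^2/(mn)$ for the large-signal contribution to ``collapse below $O(sT^2)$,'' and nothing in the hypotheses guarantees that: $T^2$ can be arbitrarily small relative to $d^{-L/2}$.

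The fix, which is implicitly what the paper does when it bounds $\theta_i^2\,\Phi(\cdot)$ uniformly in $\theta_i$, is not to discard the $\theta_i$-dependence when passing to $d^{-L/2}$. Keep $\Pr[|\bar{X}_i| < T] \le 2e^{-\theta_i^2/(8\tau^2)}$ and bound the product $\theta_i^2 \cdot 2e^{-\theta_i^2/(8\tau^2)}$ by the global maximum of $u \mapsto u\, e^{-u/(8\tau^2)}$, namely $8\tau^2/e = O(\tau^2) = O\!\left(T^2/(L\log d)\right)$, which is $\ll T^2$ for any fixed $L \ge 1$, with no boundedness assumption on $\theta_i$ and no requirement on the size of $T$. (Since this function is decreasing on $u > 8\tau^2$ and you are restricting to $u = \theta_i^2 > 4T^2 = 4L\log d \cdot \tau^2 \ge 8\tau^2$ once $L\log d \ge 2$, the sharper bound $O(T^2 d^{-L/2})$ is also available.) With that substitution the large-signal per-coordinate loss becomes $O(\tau^2)$, and the rest of your argument goes through as written.
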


The proof of the theorem is deferred to supplementary material. Note that when $\alpha = 1$, we have a protocol with $\tilde{O}(dm)$ communication cost and mean-squared loss $O(s\sigma^2/(mn))$, and when $\alpha = d/s$, the communication cost is $\tilde{O}(sm)$ but squared loss $O(d\sigma^2/(mn))$. Comparing to the case where we don't have sparse structure, basically we either replace the $d$ factor in the communication cost by the intrinsic dimension $s$ or the $d$ factor in the squared loss by $s$, but not both.

\subsection{Improved upper bound}\label{sec:improved_upper_bound}

The lower bound provided in Section~\ref{sec:main_result_lower} is only tight up to polylogarithmic factor.  To achieve the centralized minimax rate $\frac{\sigma^2d}{mn}$, the best existing upper bound of $O(dm \log(m))$ bits of communication is achieved by the simple protocol  that ask each machine to send its sample mean with $O(\log n)$ bits precision . We improve the upper bound to $O(dm)$ using the interactive protocols. 

Recall that the class of unknown distributions of our model is $\mathcal{P}^d = \{\mathcal{N}(\vectheta, \sigma^2I_d) : \theta \in [-1,1]^d\}$. 


\begin{theorem}{\label{improved_upperbd_prelim}} 
Then there is an interactive protocol $\Pi$ with communication $O(md)$ and an estimator $\vhtheta$ based on $\Pi$ which estimates $\vectheta$ up to a squared loss of $O(\frac{d\sigma^2}{mn})$. 
\end{theorem}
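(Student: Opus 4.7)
The plan is to reduce to one dimension by independence: since $\vec P = \calN(\vectheta,\sigma^2 I_d)$ factorizes across coordinates, running an independent 1D protocol on each of the $d$ coordinates with $O(m)$ bits and squared loss $O(\sigma^2/(mn))$ produces a $d$-dimensional protocol with $O(md)$ bits of communication and, by linearity of expectation, mean-squared loss $O(d\sigma^2/(mn))$. So the task reduces to exhibiting the 1D protocol.

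For the 1D problem I propose a two-phase interactive protocol. \emph{Phase 1} uses a noisy binary search to build a reference point $\hat\theta_0$ close to $\theta$. Maintain an interval $I_t=[a_t,b_t]$ starting at $[-1,1]$; in round $t$ the $t$-th unused machine broadcasts a single bit $\mathbf{1}[\bar{X}^{(t)} > (a_t+b_t)/2]$ and the center halves $I_t$ in the direction indicated. A Gaussian tail bound gives that this bit agrees with $\mathbf{1}[\theta > (a_t+b_t)/2]$ with probability $1 - 1/\mathrm{poly}(m)$ whenever $\theta$ is farther than $C\sigma\sqrt{\log m / n}$ from the midpoint. A union bound then yields, with high probability, that after $T = O(\log(1 + \sqrt{n}/\sigma))$ rounds the interval has length $O(\sigma/\sqrt{n})$, so its midpoint $\hat\theta_0$ satisfies $|\hat\theta_0 - \theta| = O(\sigma/\sqrt{n})$.

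\emph{Phase 2} refines $\hat\theta_0$ by dithered averaging. For each of the remaining $m-T$ machines, share a dither $U_j\sim\mathrm{Unif}[-p/2,p/2]$ at precision $p = \Theta(\sigma/\sqrt{n})$ via public randomness, have machine $j$ transmit the nearest multiple of $p$ to $\bar{X}^{(j)} - \hat\theta_0 - U_j$ truncated to an $O(1)$-level window around zero, and let the center output $\vhtheta = \hat\theta_0 + \frac{1}{m-T}\sum_j (\tilde{X}^{(j)} - \hat\theta_0)$ after undoing the dither. The standard dithered-quantization argument shows that the quantization noise is mean-zero, independent across machines, and of per-machine variance $O(\sigma^2/n)$, so combined with the per-machine statistical variance $\sigma^2/n$ the averaged estimator has squared error $O(\sigma^2/(mn))$. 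Phase 2 costs $O(m)$ bits and Phase 1 costs $T$ bits, giving $O(m+T) = O(m)$ in the regime where the target loss is meaningful; the complementary regime $\sigma^2 \gtrsim mn$ is trivially handled by the constant estimator $\vhtheta = 0$, which already achieves $\|\vectheta\|^2 \le d \le d\sigma^2/(mn)$.

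The main obstacle is managing tail events at the interface of the two phases. In Phase 1 the last few bits become unreliable once the interval width approaches the noise scale $\sigma/\sqrt{n}$; this is controlled by a union bound over $T$ rounds, and the worst-case contribution of failure events to the MSE, bounded by $\|\vectheta\|^2/\mathrm{poly}(m) = O(d/\mathrm{poly}(m))$, is easily absorbed into $O(\sigma^2/(mn))$. In Phase 2, a sample mean may occasionally escape the $O(1)$-level window, potentially introducing bias into $\vhtheta$; but because $|\theta - \hat\theta_0| = O(\sigma/\sqrt{n})$ from Phase 1, the Gaussian tail around $\hat\theta_0$ gives squared bias contribution $O(\sigma^2/(mn))$. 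A minor bookkeeping point is that Phase 1 uses only $T$ of the $m$ machines, leaving $\Omega(m)$ for Phase 2 in all interesting parameter regimes.
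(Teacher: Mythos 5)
Your two-phase plan (coarse binary search, then dithered quantized averaging) is a genuinely different route from the paper, which runs a single adaptive noisy binary search all the way down to precision $1/\sqrt{m}$ (in the $n=1,\sigma=1$ normalization), shrinking the interval by a factor $3/4$ each round, using a growing number of machines per round, and --- crucially --- scheduling the per-round failure probability to \emph{increase} geometrically with the round number. That scheduling is the entire reason the paper's protocol spends $O(m)$ bits rather than $O(m\log m)$: the early rounds, which are cheap (few machines), are run with very small failure probability because an early failure wrecks the squared loss, while the late rounds, which are expensive (many machines per round), are allowed a larger failure probability because a late failure only costs a tiny amount of extra loss. Your proposal does not have a mechanism that plays this role, and this is where it breaks.

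Concretely, there are two gaps. First, your Phase~1 sends one bit per round and relies on that bit being correct with probability $1-1/\mathrm{poly}(m)$ ``whenever $\theta$ is farther than $C\sigma\sqrt{\log m/n}$ from the midpoint.'' Once the interval width drops below $\sigma\sqrt{\log m/n}$, the midpoint can be within that distance of $\theta$, and then a single sample mean gives a bit with \emph{constant} error probability, not $1/\mathrm{poly}(m)$; the union bound in your ``main obstacle'' paragraph does not cover these rounds. The correct conclusion is that, with high probability, $|\hat\theta_0-\theta|=O(\sigma\sqrt{\log m/n})$, a $\sqrt{\log m}$ factor worse than the $O(\sigma/\sqrt{n})$ you assert. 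Second, even granting $|\hat\theta_0-\theta|=O(\sigma/\sqrt n)$, the $O(1)$-level window in Phase~2 introduces a clipping bias that dithering does not remove: with a window of $K$ levels at step $p=\Theta(\sigma/\sqrt n)$, the per-machine bias from saturation is $\Theta\left(\frac{\sigma}{\sqrt n}\,e^{-\Theta(K^2)}\right)$, which does not average out across the $m$ machines and contributes $\Theta\left(\frac{\sigma^2}{n}\,e^{-\Theta(K^2)}\right)$ to the squared loss. For this to be $O(\sigma^2/(mn))$ you need $K=\Omega(\sqrt{\log m})$, which costs $\Theta(\log\log m)$ bits per machine, so the protocol as a whole uses $\Theta(m\log\log m)$ bits, not the claimed $O(m)$. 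The natural ways to patch this (a wider window, an overflow flag plus a full-precision fallback message) all leave behind a residual $\omega(1)$ per-machine cost unless you reintroduce the round-by-round probability schedule that the paper uses to kill the polylog factors.
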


\vspace{2mm}
\begin{remark}
Our protocol is interactive but not simultaneous, and it is a very interesting question whether the upper bound of $O(dm)$ could be achieved by a simultaneous protocol. 
\end{remark}

\subsection{Improved lower bound for simultaneous protocols} 
Although we are not able to prove $\Omega(dm)$ lower bound for achieve the centralized minimax rate in the interactive model, the lower bound for simultaneous case can be improved to $\Omega(dm)$. Again,  we lowerbound the information cost for the one dimensional problem first, and applying the direct-sum theorem in Section~\ref{sec:main_result_lower}, we got the $d$-dimensional lower bound. 

%
%

\begin{theorem}{\label{improved_lb_dim}}
Suppose simultaneous protocol $(\Pi,\vhtheta)$ estimates the mean of $\calN(\vectheta, \sigma^2 I_d)$, for all $\vectheta \in [-1,1]^d$, with 
mean-squared loss $R$, and  communication cost $B$, 
Then $$R \ge \Omega\left(\min \left\{\frac{d^2 \sigma^2 }{n B }, d\right\}\right)$$
\noindent As a corollary, when $\sigma^2 \le mn$, to achieve mean-squared loss $R = \frac{d\sigma^2}{m n}$, the communication cost $B$ is at least $\Omega(dm)$. 
\end{theorem}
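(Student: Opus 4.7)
The plan is to combine the direct-sum theorem (Theorem~\ref{direct_sum}) with a sharpened one-dimensional information lower bound for simultaneous protocols. Since the reduction in Theorem~\ref{direct_sum} preserves simultaneity, it suffices to prove the following 1D claim: any simultaneous protocol $(\Pi, \htheta)$ estimating the mean of $\calN(\theta, \sigma^2)$ from $m$ machines with $n$ samples each, achieving squared loss $r$ and information cost $c$ under some prior $\calV$ on $[-1,1]$, must satisfy $r \ge \Omega(\min(\sigma^2/(nc),\,1))$. Given this, applying Theorem~\ref{direct_sum} produces a simultaneous 1D protocol with squared loss $\le 4R/d$ and information cost $\le 4C/d \le 4B/d$, and substituting yields $R \ge \Omega(\min(d^2\sigma^2/(nB),\,d))$ as claimed.

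For the 1D lower bound, I would fix a smooth prior $\calV$ supported on $[-1,1]$ with bounded Fisher information $I(\calV) = O(1)$ (any compactly-supported bump density works) and apply the van Trees inequality, which gives $r \ge 1/(\Exp_{\theta}[I_Y(\theta)] + I(\calV))$, where $I_Y(\theta)$ is the Fisher information in the transcript $Y$ about $\theta$. Since the protocol is simultaneous, the messages $Y_1,\dots,Y_m$ are conditionally independent given $\theta$ and the public randomness, so Fisher information is additive: $I_Y(\theta) = \sum_j I_{Y_j}(\theta)$. The crux is then a per-machine Fisher-to-mutual-information inequality of the form $I_{Y_j}(\theta) \le O((n/\sigma^2)\cdot I(X_j;Y_j\mid\theta))$, which exploits the Gaussian channel structure --- for instance via a strong data processing inequality, or by directly comparing the $\chi^2$-divergence between $\Pbb_{Y_j\mid\theta=t_1}$ and $\Pbb_{Y_j\mid\theta=t_0}$ for nearby $t_0,t_1$ against the corresponding $\chi^2$-divergence for $X_j$, which equals $(t_1-t_0)^2 n/\sigma^2$ for Gaussians. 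Summing yields $I_Y(\theta) \le O(nc/\sigma^2)$, and plugging into van Trees gives either $r = \Omega(1)$ or $r \ge \Omega(\sigma^2/(nc))$.

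The main obstacle is establishing the per-machine Fisher-to-mutual bound with the correct constant, i.e., without the extra $\log m$ factor that appears in the interactive analysis of \cite{ZDJW13}. In the interactive setting, messages depend on previously-broadcast ones, and one cannot localize the SDPI to a single machine, forcing the use of a weaker aggregate bound that costs a logarithm; in the simultaneous setting the conditional independence of the $Y_j$'s allows the SDPI to be applied per machine, removing the log. Some care is also needed to handle private randomness (which the protocol may use) and to verify that the Markov chain $\theta \to X_j \to Y_j$ holds once one conditions on the public randomness. After these points are handled, the rest is a standard van Trees computation, after which Theorem~\ref{direct_sum} lifts the bound to the $d$-dimensional simultaneous setting.
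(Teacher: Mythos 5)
Your high-level plan matches the paper's: both reduce to a one-dimensional lower bound for simultaneous protocols via the direct-sum theorem (noting it preserves simultaneity), then plug in a one-dimensional information-cost bound. The difference is in how you obtain the one-dimensional bound, and there you take a genuinely different route from the paper. The paper places a Gaussian prior $\calN(0,\delta^2)$ on the mean, invokes the Erkip--Cover strong data processing inequality for jointly Gaussian variables (their Theorem~7, restated as Theorem~\ref{erkip1}) to show $I(Y;V)\le \frac{n\delta^2}{\sigma^2}\cdot\ic(\Pi)$ for simultaneous protocols (Lemma~\ref{one_machine}, Corollary~\ref{cicost}, Lemma~\ref{info_decay}), and finishes with a Fano-type argument that any $R$-accurate estimator must extract $\Omega(1)$ bits about the sign of $V$. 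You instead propose a smooth prior, the van Trees inequality, additivity of Fisher information over conditionally independent messages, and a Fisher-to-mutual-information SDPI $I_{Y_j}(\theta)\lesssim \frac{n}{\sigma^2}\cdot I(X_j;Y_j\mid\theta)$. That is a legitimate parallel approach, and it buys you a more off-the-shelf finish (van Trees instead of the ad hoc sign/Fano step), but the Fisher-to-MI SDPI you identify as ``the crux'' is not a standard black box and you do not actually prove it; it is essentially the local ($\delta\to 0$) version of the paper's Lemma~\ref{one_machine}, obtained by expanding $I(Y_j;V)\approx\frac{\delta^2}{2}I_{Y_j}(\theta_0)$, so one way or another you end up needing the jointly-Gaussian SDPI of Erkip--Cover. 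In that sense the approaches converge at the key technical lemma.

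One smaller point of disagreement with your narrative: you attribute the removal of the $\log m$ factor entirely to being able to localize the SDPI per machine in the simultaneous setting. The paper's own explanation is that the $\log m$ loss in~\cite{ZDJW13} comes from truncating the prior to a two-point distribution, and the move to a full Gaussian (or, in your variant, smooth) prior is what avoids it. Both conditional independence (to sum per machine) and the un-truncated prior (to apply the jointly-Gaussian SDPI cleanly) are needed; your diagnosis captures only the first. Finally, you handle the $C\le B$ step and the plumbing through Theorem~\ref{direct_sum} correctly, so once the one-dimensional Fisher-to-MI lemma is actually established (either by importing a known result in the Barnes--Han--\"Ozg\"ur style or by deriving it from Erkip--Cover as above), your argument would go through.
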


\section{Proof sketches}


\subsection{Proof sketch of theorem \ref{direct_sum} and corollary \ref{dimension_lb}}

To prove a lower bound for the $d$ dimensional problem using an existing lower bound for one dimensional problem, we demonstrate a reduction that uses the (hypothetical) protocol $\Pi$ for $d$ dimensions to construct a 
protocol for the one dimensional problem. 

For each fixed coordinate $i\in [d]$, we design a protocol $\Pi_i$ for the one-dimensional problem by embedding the one-dimensional problem into the $i^{th}$ coordinate of the $d$-dimensional problem. We will show essentially that if the machines first collectively choose randomly a coordinate $i$, and run protocol $\Pi_i$ for the one-dimensional problem, then the information cost and mean-squared loss of this protocol will be only $1/d$ factor of those of the $d$-dimensional problem. Therefore, the information cost of the $d$-dimensional problem is at least $d$ times the information cost of one-dimensional problem. 

\begin{Protocol}
Inputs : Machine $j$ gets samples $X^{(j,1)}, \ldots, X^{(j,n)}$ distributed according to $\calN(\theta, \sigma^2)$, where $\theta \sim \calV$. 
\begin{enumerate}
\item All machines publicly sample $\brtheta_{-i}$ distributed according to $\calV^{d-1}$. 
\item Machine $j$  privately samples $\brX^{(j,1)}_{-i}, \dots, \brX^{(j,n)}_{-i}$ distributed according to $\calN(\brtheta_{-i}, \sigma^2 I_{d-1})$. Let $\brX^{(j,k)} = (\brX^{(j,k)}_{1} ,\dots, \brX^{(j,k)}_{i-1}, X^{(j,k)}, \brX^{(j,k)}_{i+1}, \dots, \brX^{(j,k)}_{d})$. 
\item All machines run protocol $\Pi$ on data $\brX$ and get transcript $Y_i$. The estimator $\htheta_i$ is $\htheta_i(Y_i) = \vhtheta(Y)_i$ i.e. the $i^{th}$ coordinate of the $d$-dimensional estimator. 
\end{enumerate}
\caption{$\Pi_i$}
\label{Protocol1}
\end{Protocol}

In more detail, under protocol $\Pi_i$ (described formally in Protocol~\ref{Protocol1}) the machines prepare a $d$-dimensional dataset as follows: First they fill the one-dimensional data that they got into the $i^{th}$ coordinate of the $d$-dimensional data. Then the machines choose publicly randomly $\vectheta_{-i}$ from distribution $\mathcal{V}^{d-1}$, and draw independently and privately gaussian random variables from $\mathcal{N}(\vectheta_{-i}, I_{d-1})$, and fill the data into the other $d-1$ coordinates. Then machines then simply run the $d$-dimension protocol $\Pi$ on this tailored dataset. Finally the estimator, denoted by $\htheta_i$, outputs the $i^{th}$ coordinate of the $d$-dimensional estimator $\vhtheta$. 


We are interested in the mean-squared loss and information cost of the protocol $\Pi_i$'s that we just designed. The following lemmas relate $\Pi_i$'s with the original protocol $\Pi$. 

\begin{restatable}{lemma}{lemloss}
\label{lem:loss}
Protocols $\Pi_i$'s satisfy  
$\sum_{i=1}^d R_{\mathcal{V}}\left((\Pi_i,\htheta_i), \theta\right) = \prolosspriord{\Pi}{\vhtheta}{\vectheta}$
\end{restatable}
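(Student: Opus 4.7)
The plan is to unpack both sides of the claimed equality by their definitions and verify that running $\Pi_i$ on a genuine one-dimensional input is \emph{distributionally identical} to running $\Pi$ on a $d$-dimensional input whose $i^{\text{th}}$ coordinate carries that one-dimensional input. Once this distributional equivalence is set up, the identity follows by linearity of expectation applied to the coordinate-wise decomposition $\|\vhtheta(Y)-\vectheta\|_2^2 = \sum_i (\vhtheta(Y)_i - \vectheta_i)^2$.

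First I would unpack the left-hand side: by definition
$$R_{\mathcal{V}}\bigl((\Pi_i,\htheta_i),\theta\bigr) = \Exp_{\theta\sim\mathcal{V}}\Exp_{X,\Pi_i}\bigl[(\htheta_i(Y_i)-\theta)^2\bigr],$$
where the inner expectation is over the one-dimensional samples $X$ together with all randomness consumed by $\Pi_i$ (the public draw of $\brtheta_{-i}\sim\mathcal{V}^{d-1}$, the private draws of $\brX^{(j,k)}_{-i}\sim\mathcal{N}(\brtheta_{-i},\sigma^2 I_{d-1})$, the public randomness $R_{\text{pub}}$, and the private randomness of $\Pi$).

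Second, I would establish the key equivalence. Because $\mathcal{V}^d$ is a product distribution and the noise $\mathcal{N}(\vectheta,\sigma^2 I_d)$ is independent across coordinates, the joint law of $\bigl((\theta,\brtheta_{-i}),\brX\bigr)$ produced by Protocol~\ref{Protocol1} is identical to the joint law of $(\vectheta,\vecX)$ in the $d$-dimensional task, where we identify $\vectheta_i$ with $\theta$ and $\vectheta_{-i}$ with $\brtheta_{-i}$. Consequently the transcript $Y_i = \Pi(\brX)$ is distributed as the transcript $Y=\Pi(\vecX)$ of the $d$-dimensional task, and by construction $\htheta_i(Y_i) = \vhtheta(Y)_i$. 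Therefore
$$R_{\mathcal{V}}\bigl((\Pi_i,\htheta_i),\theta\bigr) = \Exp_{\vectheta\sim\mathcal{V}^d}\Exp_{\vecX,\Pi}\bigl[(\vhtheta(Y)_i - \vectheta_i)^2\bigr].$$

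Finally, I would sum over $i\in[d]$ and pull the sum inside the expectation:
$$\sum_{i=1}^d R_{\mathcal{V}}\bigl((\Pi_i,\htheta_i),\theta\bigr) = \Exp_{\vectheta\sim\mathcal{V}^d}\Exp_{\vecX,\Pi}\Bigl[\sum_{i=1}^d (\vhtheta(Y)_i - \vectheta_i)^2\Bigr] = \prolosspriord{\Pi}{\vhtheta}{\vectheta},$$
which is exactly the right-hand side. There is no real obstacle here — the lemma is essentially a bookkeeping statement — but the one point that demands care is the distributional equivalence in the second step: it is precisely where the product structure of the prior $\mathcal{V}^d$ and of the Gaussian noise is used, and it is also where private randomness matters (the samples $\brX_{-i}^{(j,k)}$ on the other coordinates must be drawn privately at each machine so that the simulated input genuinely looks i.i.d.\ across machines conditional on $\brtheta_{-i}$, matching the $d$-dimensional task exactly).
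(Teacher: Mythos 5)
Your proof is correct and takes essentially the same route as the paper: establish the distributional identity between $(X,\,Y_i,\,(\theta,\brtheta_{-i}))$ under $\Pi_i$ and $(\vecX_i,\,Y,\,\vectheta)$ under $\Pi$, then sum the coordinate-wise squared errors and use linearity of expectation. Your closing observation — that the per-machine private draws of $\brX^{(j,k)}_{-i}$ are needed so the simulated inputs are genuinely independent across machines given $\brtheta_{-i}$, matching the $d$-dimensional task — is the right point to flag, and it is exactly the distributional-equivalence step the paper relies on (equation (\ref{eqn:equivlent})).
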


%

%
\begin{restatable}{lemma}{infocost}\label{lem:info-cost} Protocols $\Pi_i$'s satisfy  
$\sum_{i=1}^{d}\ic_{\mathcal{V}}(\Pi_i) \le \ic_{\mathcal{V}^d}(\Pi)$
\end{restatable}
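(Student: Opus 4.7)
The plan is to first identify $\ic_{\mathcal{V}}(\Pi_i)$ with a specific conditional mutual information in the original $d$-dimensional experiment, and then sum these up using the chain rule combined with a conditional independence fact arising from the product structure of the data distribution.

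First, I would unwind the definition of $\ic_{\mathcal{V}}(\Pi_i)$. Under $\Pi_i$, the public randomness consists of the original $R_{\text{pub}}$ of $\Pi$ augmented by $\brtheta_{-i}$, the private randomness of each machine includes the samples $\brX_{-i}$, and the transcript $Y_i$ is precisely the transcript of $\Pi$ run on the assembled input $\brX$. Because $\brtheta_{-i}$ together with $\theta_i \sim \mathcal{V}$ has the same joint distribution as $\vectheta \sim \mathcal{V}^d$, and conditional on $(\theta_i, \brtheta_{-i})$ the vector $\brX$ is distributed exactly as $\vecX$ is under $\vectheta$, one obtains the identification
\[
\ic_{\mathcal{V}}(\Pi_i) \;=\; I(X_i;Y_i \mid \theta_i, \brtheta_{-i}, R_{\text{pub}}) \;=\; I(\vecX_i; Y \mid \vectheta, R_{\text{pub}}),
\]
where the right-hand side is computed in the $d$-dimensional experiment of $\Pi$.

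Second, I would apply the chain rule to expand the $d$-dimensional information cost as
\[
\ic_{\mathcal{V}^d}(\Pi) \;=\; I(\vecX; Y \mid \vectheta, R_{\text{pub}}) \;=\; \sum_{i=1}^{d} I(\vecX_i; Y \mid \vecX_{<i}, \vectheta, R_{\text{pub}}),
\]
where $\vecX_{<i} := (\vecX_1, \ldots, \vecX_{i-1})$. Comparing with the previous step, it suffices to show the term-by-term inequality
\[
I(\vecX_i; Y \mid \vectheta, R_{\text{pub}}) \;\le\; I(\vecX_i; Y \mid \vecX_{<i}, \vectheta, R_{\text{pub}}).
\]
This is where the product structure enters: since $\vec{P} = P_1 \times \cdots \times P_d$, the coordinates $\vecX_1, \ldots, \vecX_d$ are mutually independent conditioned on $\vectheta$ (and $R_{\text{pub}}$ is independent of the data), so $\vecX_i \perp \vecX_{<i} \mid (\vectheta, R_{\text{pub}})$. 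Hence $H(\vecX_i \mid \vectheta, R_{\text{pub}}) = H(\vecX_i \mid \vecX_{<i}, \vectheta, R_{\text{pub}})$, and combining with the fact that conditioning reduces entropy, $H(\vecX_i \mid Y, \vecX_{<i}, \vectheta, R_{\text{pub}}) \le H(\vecX_i \mid Y, \vectheta, R_{\text{pub}})$, the inequality drops out. Summing over $i$ completes the proof.

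The main subtlety, and the only real obstacle, is the bookkeeping in the first step: one must verify that treating $\brX_{-i}$ as private randomness (rather than as something to be conditioned on) and that treating $\brtheta_{-i}$ as public randomness both line up correctly so that the resulting mutual information is exactly $I(\vecX_i; Y \mid \vectheta, R_{\text{pub}})$ with no stray conditioning on $\vecX_{-i}$. Once that identification is made carefully, the rest is a clean application of the chain rule and the standard principle that conditioning on independent side information can only increase mutual information.
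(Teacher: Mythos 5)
Your proposal is correct and follows essentially the same approach as the paper: establish the distributional identification $\ic_{\mathcal V}(\Pi_i) = I(\vecX_i; Y \mid \vectheta, R_{\text{pub}})$, then invoke subadditivity of conditional mutual information under conditional independence (which the paper isolates as Proposition~\ref{prop:ind}, proved by exactly the chain-rule-plus-conditioning-reduces-entropy argument you inline). Your treatment of $R_{\text{pub}}$ is in fact slightly more careful than the paper's, which conflates the public randomness of $\Pi_i$ with just $\brtheta_{-i}$ and drops $\Pi$'s own $R_{\text{pub}}$ from the notation, but the mathematical content is the same.
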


Note that the counterpart of Lemma~\ref{lem:info-cost} with communication cost won't be true, and actually the communication cost of each $\Pi_i$ is the same as that of $\Pi$. It turns out doing reduction in communication cost is much harder, and this is part of the reason why we use information cost as a proxy for communication cost when proving lower bound. Also note that the correctness of Lemma~\ref{lem:info-cost} heavily relies on the fact that $\Pi_i$ draws the redundant data privately independently (see Section~\ref{sec:prelim} and the proof for more discussion on private versus public randomness).  

\noindent By Lemma~\ref{lem:loss} and Lemma~\ref{lem:info-cost} and a Markov argument, there exists an $i \in \{1,\ldots, d\}$ such that 
$$
\proloss{\Pi_i}{\htheta_i}{\theta} \le \frac{4}{d} \cdot \proloss{\Pi}{\vectheta}{\vectheta} \quad \textrm{ and } \quad
\ic(\Pi_i) \le \frac{4}{d} \cdot \ic(\Pi)
$$
Then the pair $(\Pi', \htheta) = (\Pi_i, \htheta_i)$ solves the task $T(1,m,n, \sigma^2, \calV)$ with information cost at most $4C/d$ and squared loss $4R/d$, which proves Theorem~\ref{direct_sum}. 

\noindent Corollary \ref{dimension_lb} follows Theorem \ref{direct_sum} and the following lower bound for one dimensional gaussian mean estimation proved in \cite{ZDJW13}. We provide complete proofs in the supplementary. 

\begin{theorem}{\label{one_copy_sketch}}\cite{ZDJW13}
Let $\calV$ be the uniform distribution over $\{\pm \delta\}$, where $\delta^2 \le \min\left(1, \frac{\sigma^2 \log(m)}{n}\right)$. 
If $(\Pi,\htheta)$ solves the task $T(1,m,n,\sigma^2, \calV)$ with information cost $C$ and squared loss $R$, then either $C\ge \Omega\left(\frac{\sigma^2}{\delta^2 n \log(m)}\right)$ or $R\ge \delta^2/10$. 
\end{theorem}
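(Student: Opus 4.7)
The plan is to combine Le Cam's two-point method with a strong data processing inequality tailored to Gaussian location families. Since $\calV$ is uniform on $\{\pm\delta\}$, any estimator $\htheta$ with squared loss $R<\delta^2/10$ induces a tester $T(Y)=\sign(\htheta(Y))$ for $\theta=+\delta$ versus $\theta=-\delta$. Misidentifying the sign forces $|\htheta-\theta|\ge \delta$, so Markov's inequality bounds the testing error by $R/\delta^2<1/10$. Le Cam's method then gives $d_{TV}(P_{Y\mid\theta=+\delta},P_{Y\mid\theta=-\delta})\ge 4/5$, which, via the identity $I(\theta;Y)=\mathrm{JSD}(P_{Y\mid+\delta},P_{Y\mid-\delta})$ and $\mathrm{JSD}\ge \tfrac12 d_{TV}^2$, translates into $I(\theta;Y)\ge \Omega(1)$ bits.

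The core technical step is an information-contraction inequality of the form
\[
I(\theta;Y)\;\le\;O\!\left(\tfrac{n\delta^2}{\sigma^2}\,\log m\right)\,I_{\calV}(\vecX;Y\mid\theta),
\]
which, combined with the reduction above, immediately yields $I_{\calV}(\vecX;Y\mid\theta)\ge \Omega(\sigma^2/(n\delta^2\log m))$ and hence the theorem. The pre-factor $n\delta^2/\sigma^2$ is exactly the KL divergence between $\calN(+\delta,\sigma^2)^{\otimes n}$ and $\calN(-\delta,\sigma^2)^{\otimes n}$, so morally this says that each nat the blackboard carries about the raw samples can translate into at most $O((n\delta^2/\sigma^2)\log m)$ nats about the mean. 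I would prove it message by message: writing $Y=(Y_1,\dots,Y_T)$ for the rounds and $j_t$ for the speaking machine at round $t$, the chain rule reduces the task to the per-round bound
\[
I(\theta;Y_t\mid Y_{<t})\;\le\;O\!\left(\tfrac{n\delta^2}{\sigma^2}\log m\right)\,I(\vecX^{(j_t)};Y_t\mid Y_{<t},\theta),
\]
which one obtains by expressing both sides as expected KL divergences under the posterior of the data given $(Y_{<t},\theta)$ and bounding the $\chi^2$-divergence between the $\pm\delta$ posteriors pushed through a bounded-output channel by $O(\delta^2/\sigma^2)$ per sample. The extra $\log m$ enters from truncating the Gaussian log-likelihood ratio at its $O(\sqrt{\log m})$-quantile so that rare tail events do not dominate.

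The main obstacle is this per-round contraction step, because the posterior of $\vecX^{(j_t)}$ given $(Y_{<t},\theta)$ is no longer a clean product Gaussian: earlier blackboard messages can introduce arbitrary dependencies in the data. The argument of \cite{ZDJW13} handles this by performing the change of measure between $\theta=+\delta$ and $\theta=-\delta$ directly on the posterior and exploiting that for a Gaussian location family with sub-Gaussian log-likelihood ratio, contraction through a bounded-output channel costs only the logarithmic truncation factor noted above. Once the per-round bound is established, assembling it into the global contraction inequality via the chain rule and closing the loop with the two-point reduction is routine. Finally, the hypothesis $\delta^2\le \sigma^2\log m/n$ is exactly the regime in which $(n\delta^2/\sigma^2)\log m$ stays small enough for the contraction inequality to be nontrivial, so the theorem is stated precisely where the machinery applies.
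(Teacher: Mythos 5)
The paper does not actually prove this theorem: it is cited as implicit in Theorem~1 of \cite{ZDJW13}, with the only added observation in the appendix being that conditioning on the single-bit prior $\theta$ can reduce the mutual information by at most one bit, so the paper's definition of information cost and that of \cite{ZDJW13} agree up to an additive constant. Your outline --- reducing squared loss $R<\delta^2/10$ to a sign test via Markov's inequality to get $I(\theta;Y)=\Omega(1)$, and then invoking a truncated strong data processing inequality $I(\theta;Y)\le O\left(\tfrac{n\delta^2}{\sigma^2}\log m\right)I(\vecX;Y\mid\theta)$ assembled round-by-round via the chain rule --- is a faithful reconstruction of the argument that \cite{ZDJW13} actually gives, including the source of the $\log m$ factor in the truncation of the Gaussian log-likelihood ratio; but like the paper itself you leave the central per-round contraction lemma (the change of measure on the data posterior given prior messages and the bound on the induced divergence through a bounded-output channel) unproved and deferred to \cite{ZDJW13}, so the proposal matches the paper's approach and its level of rigor rather than supplying an independent proof.
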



\subsection{Proof sketch of theorem \ref{improved_upperbd_prelim}}


The protocol is described in protocol \ref{Protocol6} in the supplementary. We only describe the $d=1$ case, while for general case we only need to run $d$ protocols individually for each dimension. 

The central idea is that we maintain an upper bound $U$ and lower bound $L$ for the target mean, and iteratively ask the machines to send their sample means to shrink the interval $[L,U]$.  Initially we only know that $\theta \in [-1,1]$. Therefore we set the upper bound $U$ and lower bound $L$ for $\theta$ to be $-1$ and 1. 
In the first iteration the machines try to determine whether $\theta < 0$ or $\ge 0$. This is done by letting several machines (precisely, $O(\log m)/\sigma^2$ machines) send whether their sample means are $< 0$ or $\ge 0$. If the majority of the samples are $< 0$, $\theta$ is likely to be $< 0$. However when $\theta$ is very close to $0$, one needs a lot of samples to determine this, but here we only ask $O(\log m)/\sigma^2$ machines to send their sample means. Therefore we should be more conservative and we only update the interval in which $\theta$ might lie to $[-1,1/2]$ if the majority of samples are $< 0$. 

We repeat this until the interval $(L,U)$ become smaller than our target squared loss. Each round, we ask a number of new machines sending 1 bits of information about whether their sample mean is large than $(U+L)/2$. The number of machines participated is carefully set so that the failure probability $p$ is small. An interesting feature of the protocol is to choose the target error probability $p$ differently at each iteration so that we have a better balance between the failure probability and communication cost. The complete the description of the protocol and proof are given in the supplementary.

\subsection{Proof sketch of theorem \ref{improved_lb_dim}}
We use a different prior on the mean $\calN(0, \delta^2)$ instead of uniform over $\{-\delta, \delta\}$ used by \cite{ZDJW13}. Gaussian prior allows us to use a strong data processing inequality for jointly gaussian random variables by \cite{erkip_cover}. Since we don't have to truncate the gaussian, we don't lose the factor of $\log(m)$ lost by \cite{ZDJW13}. 

\begin{theorem}{(\cite{erkip_cover}, Theorem 7)}{\label{erkip1}} Suppose $X$ and $V$ are jointly gaussian random variables with correlation $\rho$. Let $Y \leftrightarrow X \leftrightarrow V$ be a markov chain with $I(Y;X) \le R$. Then $I(Y;V) \le \rho^2 R$. 
\end{theorem}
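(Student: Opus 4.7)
The plan is a two-step reduction: first use the direct-sum Theorem~\ref{direct_sum} to pass from the $d$-dimensional problem to a one-dimensional problem, then prove a sharper one-dimensional information-cost lower bound using a Gaussian prior together with the Erkip--Cover strong data processing inequality (Theorem~\ref{erkip1}). Concretely, if I can establish that any one-dimensional simultaneous protocol with information cost $C'$ and squared loss $R'$ satisfies $R' \ge \Omega(\min\{\sigma^2/(nC'),\,1\})$, then Theorem~\ref{direct_sum}, which preserves simultaneity, produces from the given $d$-dimensional protocol a one-dimensional simultaneous protocol with information cost $C' \le 4\,\ic(\Pi)/d \le 4B/d$ and loss $R' \le 4R/d$. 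Substituting yields $R \ge \Omega(\min\{d^2\sigma^2/(nB),\,d\})$, which is the target bound.

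For the one-dimensional piece, I place the Gaussian prior $\theta \sim \mathcal{N}(0,\delta^2)$ on the unknown mean. The key structural fact is that in a simultaneous protocol machine $j$'s message $Y_j$ is a function of $X_j = (X^{(j,1)},\ldots,X^{(j,n)})$ and private/public randomness only, while $\bar X_j = \tfrac{1}{n}\sum_k X^{(j,k)}$ is a sufficient statistic for $\theta$. Hence the Markov chain $\theta \leftrightarrow \bar X_j \leftrightarrow X_j \leftrightarrow Y_j$ holds, and $(\theta,\bar X_j)$ is jointly Gaussian with squared correlation $\rho^2 = \delta^2/(\delta^2 + \sigma^2/n)$. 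Theorem~\ref{erkip1} then gives $I(\theta;Y_j) \le \rho^2\, I(\bar X_j;Y_j)$. Expanding via the chain rule $I(\bar X_j;Y_j) = I(\bar X_j;Y_j\mid\theta) + I(\theta;Y_j)$ (the Markov chain kills the $I(\theta;Y_j\mid \bar X_j)$ cross term) and rearranging yields
\[
(1-\rho^2)\, I(\theta;Y_j) \;\le\; \rho^2\, I(\bar X_j;Y_j\mid\theta) \;\le\; \rho^2\, I(X_j;Y_j\mid\theta).
\]
In a simultaneous protocol the $Y_j$ are conditionally independent given $\theta$ and the public randomness, so $I(\theta;Y) \le \sum_j I(\theta;Y_j)$; summing and using $\rho^2/(1-\rho^2) = \delta^2 n/\sigma^2$ gives $I(\theta;Y) \le (\delta^2 n/\sigma^2)\, C'$ with $C' = \ic(\Pi)$.

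To convert this into a loss bound I use the Gaussian rate-distortion inequality: for any estimator $\htheta(Y)$ with $\E[(\htheta-\theta)^2] \le R'$, the Gaussian entropy identity and the maximum-entropy property of Gaussians give $I(\theta;Y) \ge I(\theta;\htheta) \ge h(\theta) - h(\theta - \htheta) \ge \tfrac12 \log(\delta^2/R')$. Combining with the previous display produces $\tfrac12\log(\delta^2/R') \le \delta^2 n C'/\sigma^2$, and the tuning $\delta^2 \asymp \sigma^2/(nC')$ makes both sides $O(1)$, delivering $R' \ge \Omega(\sigma^2/(nC'))$ whenever $\delta^2 \le 1$. The regime $\delta^2 > 1$---which would put mass outside the parameter range $\theta \in [-1,1]$---is handled by clipping $\delta^2$ at $1$, and is exactly what produces the $\Omega(d)$ term after the direct-sum step.

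The main technical obstacle is the information-cost bookkeeping. Because Theorem~\ref{direct_sum} operates on information cost rather than on communication, I need the one-dimensional bound in terms of $\ic(\Pi)$, not the cruder $H(Y)$ or $|Y|$. This is why the step bounding $I(\theta;Y_j)$ by $\rho^2\, I(X_j;Y_j\mid\theta)$ must go through the Markov-chain arithmetic above rather than the simpler $\rho^2\, H(Y_j)$, and uses sufficiency of $\bar X_j$ in an essential way. A secondary delicate point is the conditional independence of the $Y_j$ given $\theta$: this only holds for simultaneous protocols, and is precisely what blocks the same argument from yielding the analogous bound in the interactive setting.
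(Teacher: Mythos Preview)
Your proposal is not a proof of the stated theorem. Theorem~\ref{erkip1} is the Erkip--Cover strong data processing inequality itself, which the paper does not prove but cites as an external result. Your write-up does not attempt to establish this inequality; it \emph{uses} it as a black box to prove Theorem~\ref{improved_lb_dim}, the $d$-dimensional simultaneous lower bound. So as a proof of the statement actually in question, the proposal is off target.

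That said, read as a proof of Theorem~\ref{improved_lb_dim}, your argument is correct and very close to the paper's. Both proceed by (i) placing a Gaussian prior $\theta\sim\mathcal N(0,\delta^2)$, (ii) using Erkip--Cover to show $I(\theta;Y_j)\le \tfrac{n\delta^2}{\sigma^2}\,I(X_j;Y_j\mid\theta)$ per machine, (iii) summing via the conditional independence of messages in a simultaneous protocol, and (iv) invoking the direct-sum theorem. The differences are cosmetic: where the paper computes the posterior of $\theta$ given $(X^{(j,1)},\dots,X^{(j,n)})$ explicitly (Lemma~\ref{one_machine}) you invoke sufficiency of $\bar X_j$ to identify the jointly-Gaussian pair directly---a cleaner route to the same $\rho^2=n\delta^2/(n\delta^2+\sigma^2)$. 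And where the paper lower-bounds $I(\theta;Y)$ by a sign-prediction / Fano argument (Theorem~\ref{improved_lb}), you use the Gaussian rate-distortion bound $I(\theta;\htheta)\ge\tfrac12\log(\delta^2/R')$. Both are standard and yield the same $R'\ge\Omega(\min\{\sigma^2/(nC'),1\})$ after tuning $\delta$; your rate-distortion step is arguably tighter in constants but otherwise interchangeable with the paper's Fano step. Neither your argument nor the paper's addresses carefully the fact that the Gaussian prior places mass outside $[-1,1]$, so that is not a differential weakness.
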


\noindent Now suppose that  each machine gets $n$ samples $X^1,\ldots, X^n \sim \calN(V, \sigma^2)$, where $V$ is the prior $\calN(0, \delta^2)$ on the mean. By an application of theorem \ref{erkip1}, we prove that if $Y$ is a $B$-bit message depending on $X^1,\ldots, X^n$, then $Y$ has only $\frac{n\delta^2}{\sigma^2} \cdot B$ bits of information about $V$. Using some standard information theory arguments, this converts into the statement that if $Y$ is the transcript of a simultaneous protocol with communication cost $\le B$, then it has at most $\frac{n\delta^2}{\sigma^2} \cdot B$ bits of information about $V$. Then a lower bound on the communication cost $B$ of a simultaneous protocol estimating the mean $\theta \in [-1,1]$ follows from proving that such a protocol must have $\Omega(1)$ bit of information about $V$. Complete proof is given in the supplementary.

\section{Conclusion}
We have lowerbounded the communication cost of estimating the mean of a $d$-dimensional spherical gaussian random variables in a distributed fashion. We provided a generic tool called direct-sum for relating the information cost of $d$-dimensional problem to one-dimensional problem, which might be of potential use for other statistical problem than gaussian mean estimation as well. 

 We also initiated the study of distributed estimation of gaussian mean with sparse structure.  We provide a simple protocol that exploits the sparse structure and conjecture its tradeoff  to be optimal:

\begin{conjecture}
If some protocol estimates the mean for any distribution $P\in \mathcal{P}_s$ with mean-squared loss $R$ and communication cost $C$,  then
$C\cdot R \gtrsim \frac{sd\sigma^2}{mn}$, 
where we use $\gtrsim$ to hide log factors and potential corner cases. 
\end{conjecture}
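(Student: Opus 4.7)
The plan is to prove the conjecture via a direct-sum reduction to a one-dimensional sparse sub-problem, mirroring the structure of Theorem \ref{direct_sum} but with a prior that respects the sparsity constraint. The hard prior I would use is the product $\mathcal{V}^d$ in which each coordinate is independently $\pm\delta$ with probability $s/(2d)$ each and $0$ with probability $1 - s/d$, where $\delta$ is a tuning parameter. A Chernoff bound shows that the realized support has size $\Theta(s)$ except on an exponentially small event, so (after conditioning) this prior is essentially supported on $\mathcal{P}_s$, and any protocol with communication $C$ and loss $R$ on $\mathcal{P}_s$ must have loss $O(R)$ on this prior as well. Theorem \ref{direct_sum} then reduces the problem to lower-bounding the information cost of the induced one-dimensional sparse prior $\mathcal{V}$ on $\{-\delta, 0, \delta\}$ against target per-coordinate loss $R/d$.

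For the one-dimensional sparse lower bound I would adapt the strong-data-processing argument from the proof of Theorem \ref{improved_lb_dim}. Under the three-point prior, the trivial estimator $\hat\theta = 0$ already achieves loss $(s/d)\delta^2$, so to beat this by a constant factor the protocol must effectively distinguish $\theta = 0$ from $\theta = \pm\delta$ on most realizations of $\theta \neq 0$. The key quantitative step is to show that each bit of the transcript carries only $O((s/d)\cdot n\delta^2/\sigma^2)$ bits of information about $\theta$, exploiting that the nonzero case occurs with prior probability $s/d$. A Fano-type argument would then force information cost $\gtrsim (s/d)\cdot \sigma^2/(n\delta^2)$ whenever $R/d \le c(s/d)\delta^2$; scaling up by the $d$ coordinates via the direct-sum theorem and tuning $\delta^2 \asymp R/s$ to saturate the constraint yields $C\cdot R \gtrsim sd\sigma^2/(mn)$ up to logarithmic factors, with the $1/m$ coming from the same per-machine variance reduction that drives Theorem \ref{improved_lb_dim}.

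The hard part — and the reason the bound remains conjectural — is the one-dimensional sparse lower bound. The Erkip--Cover-type strong data-processing inequality (Theorem \ref{erkip1}) is clean for jointly Gaussian random variables but degenerates for a mixture prior with a point mass at zero: the relevant correlation coefficient is no longer simply $\delta/\sigma$, and extra $\mathrm{poly}(d/s)$ factors can leak into the analysis if one is not careful about how the mixture weight interacts with the tensorization over $m$ machines and $n$ samples. An alternative route would be a black-box reduction from a structured communication-complexity problem, such as Gap-Hamming or sparse set-disjointness in the coordinator model, into distributed sparse Gaussian-mean estimation; but simulating the required additive Gaussian noise and private randomness in both directions without losing the $d/s$ factor is itself nontrivial and, in our view, represents the main conceptual barrier to fully resolving the conjecture.
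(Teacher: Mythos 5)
This statement is explicitly a \emph{conjecture} in the paper; the authors do not supply a proof, so there is nothing to compare your attempt against on the paper's side. What you have written is accordingly a plan of attack rather than a proof, and you are candid about where it breaks down, which is the right assessment. Your high-level reduction --- put a product prior on $[-\delta,0,\delta]^d$ with per-coordinate weights $s/(2d),\,1-s/d,\,s/(2d)$, argue via Chernoff that this is essentially supported on $\mathcal{P}_{O(s)}$, and then invoke Theorem~\ref{direct_sum} to drop to a one-dimensional problem with a three-point prior --- is the natural thing to try and is consistent with the machinery the paper develops. You also correctly identify the genuine obstacle: Theorem~\ref{erkip1} is a strong data-processing inequality for \emph{jointly Gaussian} $(X,V)$, and the three-point mixture prior on $V$ is not Gaussian, so the correlation-coefficient contraction does not transfer; Lemma~\ref{one_machine} and Corollary~\ref{cicost} lean on the exact Gaussian posterior computation and would need to be replaced by a mixture-aware SDPI, which is where the $\mathrm{poly}(d/s)$ leakage you worry about would come from.

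One concrete issue with the sketch as written: the arithmetic does not reproduce the conjectured bound. You posit a one-dimensional information-cost lower bound of order $(s/d)\cdot\sigma^2/(n\delta^2)$; scaled up by $d$ via the direct-sum theorem and with $\delta^2 \asymp R/s$, this gives $C \gtrsim s^2\sigma^2/(nR)$, i.e.\ $C\cdot R \gtrsim s^2\sigma^2/n$, which is neither the conjectured $sd\sigma^2/(mn)$ nor the $sd\sigma^2/n$ tradeoff achieved by Protocol~\ref{pro:sparse}. To hit the target you would need the one-dimensional information cost to scale like $\sigma^2/(n\delta^2)$, without the extra $(s/d)$ factor --- intuitively because even though the nonzero event has probability $s/d$, the protocol must still invest roughly $\sigma^2/(n\delta^2)$ bits to resolve the sign when it does occur, and a Fano argument over all $d$ coordinates must extract $\Omega(s)$ bits of support information in total. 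Nailing down exactly how the $s/d$ mixture weight interacts with the per-bit information contraction is precisely the unresolved part of the conjecture, so treat this as a warning that even the back-of-the-envelope bookkeeping needs care, not as a refutation of your plan. A further minor gap: Protocol~\ref{Protocol1} fills the other $d-1$ coordinates with independently drawn data, so the simulated $\vectheta$ has sparsity concentrated around $s$ but not exactly bounded by $s$; you would need the protocol's guarantee to degrade gracefully on $\mathcal{P}_{O(s)}$, or to condition the prior and argue the resulting non-product correction is negligible.
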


\bibliographystyle{unsrt}
\bibliography{main}
\clearpage
\appendix
\section{Communication Lower Bound via Direct-Sum Theorem: Proof of Theorem \ref{direct_sum}}

We restate the main theorem here for convenience

\directsum*

\ignore{
\begin{Protocol}
Inputs : Machine $j$ gets samples $X^{(j,1)}, \ldots, X^{(j,n)}$ distributed according to $\calN(\theta, \sigma^2)$, where $\theta \sim \calV$. 
\begin{enumerate}
\item All machines publicly sample $\brtheta_{-i}$ distributed according to $\calV^{d-1}$. 
\item Machine $j$  privately samples $\brX^{(j,1)}_{-i}, \dots, \brX^{(j,n)}_{-i}$ distributed according to $\calN(\brtheta_{-i}, \sigma^2 I_{d-1})$. Let $\brX^{(j,k)} = (\brX^{(j,k)}_{1} ,\dots, \brX^{(j,k)}_{i-1}, X^{(j,k)}, \brX^{(j,k)}_{i+1}, \dots, \brX^{(j,k)}_{d})$. 
\item All machines run protocol $\Pi$ on data $\brX$ and get transcript $Y_i$. The estimator $\htheta_i$ is $\htheta_i(Y_i) = \vhtheta(Y_i)_i$ i.e. the $i^{th}$ coordinate of the $d$-dimensional estimator. 
\end{enumerate}
\caption{$\Pi_i$}
\label{Protocol1}
\end{Protocol}
}

\noindent We consider the protocol $\Pi_i$ defined in Protocol \ref{Protocol1}.  Lets denote the private and public randomness of the protocol $\Pi_i$ as $R_{\text{priv}}$ and $R_{\text{pub}}$ respectively. 
Note that in this section, $\theta$ is always a random variable from distribution $\mathcal{V}$ and $\vectheta$ from $\mathcal{V}^d$. We skip the subscripts $\mathcal{V}$ and $\mathcal{V}^d$ when it is clear from the context.

Recall that we relate the information cost and mean-squared loss of $\Pi_i$'s and $\Pi$ by Lemma \ref{lem:loss} and ~\ref{lem:info-cost}, which are restated and proved below. 

\lemloss*

\infocost*


\begin{proof}[Proof of Lemma~\ref{lem:loss}]
The general idea is quite simple. By our design, the loss of each $\Pi_i$ is the loss of $\Pi$ restricted to the $i^{th}$ coordinate.  The proof is an almost straightforward calculation that formalizes this intuition. 

First note that by definition of the square loss and $\htheta_i$, we have 
\begin{align*}
\prolossprior{\Pi_i}{\htheta_i}{\theta} &= \Exp[ (\htheta_i(Y_i)-\theta)^2] = \Exp [(\vhtheta(Y_i)_i-\theta)^2] 
\end{align*}
where the expectation over all the randomness of the mean, the data, and the protocols. Observe that under protocol $\Pi_i$, the distribution $(\breve{\theta}_{-i},\theta)$ is $\mathcal{V}^d$， and therefore, the data $\breve{X}$ that machines prepared has the same distribution as $\vecX$. It follows that the joint distribution of $X ,Y_i , (\theta, \brtheta_{-i})$ is the same as the distribution of $\vecX_i, Y, \vectheta$. Therefore, 
\begin{equation}
\Exp [(\vhtheta(Y_i)_i-\theta)^2] = \Exp[(\vhtheta(Y)_i-\vectheta_i)^2] ] \label{eqn:equivlent}
\end{equation}

Then it follows the linearity of expectation that

\begin{align*}
\sum_{i=1}^d \proloss{\Pi_i}{\htheta_i}{\theta} &= \sum_{i=1}^d \Exp [(\vhtheta(Y_i)_i-\theta)^2] =  \sum_{i=1}^d \Exp[(\vhtheta(Y)_i-\vectheta_i)^2] ] \\
&= \Exp  \left[\sum_{i=1}^d (\vhtheta(Y)_i-\vectheta_i)^2\right] \\
&= \Exp [ \| \vhtheta(Y)-\vectheta \|^2] = \prolosspriord{\Pi}{\vhtheta}{\vectheta}
\end{align*}

where in the first line we used the definition and equation (\ref{eqn:equivlent}), the second line the linearity of expectation, the final line the definition again. 
\end{proof}

%

\begin{proof}[Proof of Lemma~\ref{lem:info-cost}] 
Recall under $(\Pi_i, \htheta_i)$, machines prepare $\brX$, which has the same distribution as $\vecX$ in the problem $T(d,m,n, \sigma^2, \calV^d)$. Also the joint distribution of $\vecX_i, Y, \vectheta$ is the same as the distribution of $X ,Y_i , (\theta, \brtheta_{-i})$. Therefore, we have that 
\begin{equation}
I(\vecX_i; Y \mid \vectheta) = I(X ; Y_i \mid \theta, \brtheta_{-i})\label{eqn:equivalent2}
\end{equation}

\noindent By definition, $\ic(\Pi_i) = I(X; Y_i \mid \theta, R_{\text{pub}}) $, where $R_{\text{pub}}$ is $\brtheta_{-i}$ because each machine publicly draws $\brtheta_{-i}$ from $\mathcal{V}^{d-1}$. Therefore, $\ic(\Pi_i) = I(X ; Y_i \mid \theta, \brtheta_{-i})$, and taking the sum over all $i$, and use equation (\ref{eqn:equivalent2})
\begin{align*}
\sum_{i=1}^{d}\ic(\Pi_i) &= \sum_{i=1}^{d} I(X ; Y_i \mid \theta, \brtheta_{-i}) \\
&=  \sum_{i=1}^{d} I(\vecX_i; Y \mid \vectheta)
\end{align*} 

\noindent Note that the distribution of $\vecX$ conditioned on $\vectheta$ is a spherical gaussian $\calN(\vectheta, \sigma^2 I_d)$, and recall that $\vecX_i$ is a shorthand for the collection of $i$th coordinates of all the samples: $\vecX_i = \{\vecX^{(j,k)}_i : j \in [m],k\in [n]\}$. Therefore, 
$\vecX_1, \ldots, \vecX_d$ are independent conditioned on $\vectheta$.  Hence, 
\begin{align*}
 \sum_{i=1}^{d} I(\vecX_i; Y \mid \vectheta) \le I(\vecX ; Y \mid \vectheta) = \ic(\Pi)
\end{align*}
where the inequality follows Proposition~\ref{prop:ind}, a basic property of conditional mutual information. 

\ignore{
Because $I(\brX, \brY\mid \brtheta_{-i}) = 0$, we have that $I(\brX_i;\brY\mid \brtheta) \ge I(\brX_i;\brY\mid \brtheta_i)$. By definition we have  $I(\brX_i;\brY\mid \brtheta_i)= I(X;\brY\mid \theta)$. Note that $H(\theta)\le 1$ and therefore we have 

$$I(\vecX_i;\vecY\mid \vectheta) \ge I(X;\brY\mid \theta) =\ic(\Pi_i) $$

On the other hand, since conditioned on $\vectheta$, $\vecX_i$ are independent, we have that $\sum_{i=1}^{d} I(\vecX_i;\vecY\mid \vectheta)  \le  I(\vecX;\vecY\mid \vectheta)$

Therefore we get that 
$$\sum_{i=1}^{d}\ic(\Pi_i) \le \ic(\vecPi)$$
}

\end{proof}

\begin{remark}
The role of private randomness can be crucially seen here. It is very important for the machines to privately get samples in coordinates other than $i$ for the information cost to go down by a factor of $d$.
\end{remark}

\begin{proof}[Proof of Theorem~\ref{direct_sum}]
\noindent By Lemma~\ref{lem:loss} and Lemma~\ref{lem:info-cost} and a Markov argument, there exists an $i \in \{1,\ldots, d\}$ such that 
$$
\proloss{\Pi_i}{\htheta_i}{\theta} \le \frac{4}{d} \cdot \proloss{\Pi}{\vectheta}{\vectheta}
$$
and 
$$
\ic(\Pi_i) \le \frac{4}{d} \cdot \ic(\Pi)
$$
Then the pair $(\Pi', \htheta) = (\Pi_i, \htheta_i)$ solves the task $T(1,m,n, \sigma^2, \calV)$ with information cost at most $4C/d$ and squared loss $4R/d$. 
\end{proof}


\noindent We are going to apply the theorem above to the one-dimensional lower bound by \cite{ZDJW13}. Theorem~\ref{one_copy} below,  though not explicitly stated, is implicit in the proof of Theorem 1 of \cite{ZDJW13}. 
Furthermore, their techniques are general enough to prove lower bounds on the information cost for protocols with private randomness, though they didn't mention this explicitly.  Also in \cite{ZDJW13}, the definition of information cost is a bit different. They do not condition on the prior of $\theta$, but since in the one dimensional case, this prior is just over $\{\pm \delta\}$, conditioning on it can reduce the mutual information by at most $1$ bit.

$$
I(X; Y \mid \theta, R_{\text{pub}}) \ge I(X; Y | R_{\text{pub}}) - H(\theta) \ge I(X; Y | R_{\text{pub}}) - 1
$$

\begin{theorem}{\label{one_copy}}\cite{ZDJW13}
Let $\calV$ be the uniform distribution over $\{\pm \delta\}$, where $\delta^2 \le \min\left(1, \frac{\sigma^2 \log(m)}{n}\right)$. 
If $(\Pi,\htheta)$ solves the task $T(1,m,n,\sigma^2, \calV)$ with information cost $C$ and squared loss $R$, then either $C\ge \Omega\left(\frac{\sigma^2}{\delta^2 n \log(m)}\right)$ or $R\ge \delta^2/10$. 

\end{theorem}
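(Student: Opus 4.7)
The plan is a standard Le Cam--style reduction from squared loss to two-point hypothesis testing, followed by an information-theoretic upper bound on the information $Y$ can carry about the hidden bit in terms of the information cost. Since $\calV$ is uniform on $\{\pm\delta\}$, write $\theta=V\delta$ with $V\in\{\pm 1\}$ uniform, so that estimating $\theta$ to squared error below $\delta^2/10$ is at least as hard as distinguishing $V=+1$ from $V=-1$. The whole argument will then take the shape $\Omega(1)\le I(V;Y)\le O(n\delta^2\log m/\sigma^2)\cdot C$.

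\textbf{Step 1 (from squared loss to $I(V;Y)=\Omega(1)$).} Assume $R<\delta^2/10$. By Markov's inequality, the event $\{(\htheta(Y)-V\delta)^2<\delta^2\}$ has probability at least $9/10$, and on this event $\sign(\htheta(Y))=V$; hence $\sign\circ\htheta$ is a binary test that recovers $V$ with error at most $1/10$. Applying Pinsker's inequality to $P_{\pm 1}:=P_{Y\mid V=\pm 1}$ yields $D_{\mathrm{KL}}(P_{+1}\|P_{-1})+D_{\mathrm{KL}}(P_{-1}\|P_{+1})=\Omega(1)$, and in particular $I(V;Y)=\Omega(1)$.

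\textbf{Step 2 (from $I(V;Y)$ to the information cost).} Decompose the transcript into individual messages $Y=(Y_1,Y_2,\dots)$, where $Y_t$ is sent by some machine $j_t$ as a function of its samples $X^{(j_t,\cdot)}\sim\calN(V\delta,\sigma^2 I_n)$, the past transcript $Y_{<t}$, and private randomness. The chain rule gives $I(V;Y)=\sum_t I(V;Y_t\mid Y_{<t})$, and since $V\to X^{(j_t,\cdot)}\to Y_t$ is a Markov chain conditionally on $Y_{<t}$, I would bound each summand using a strong data-processing inequality adapted to the Gaussian channel. The target per-round bound is $I(V;Y_t\mid Y_{<t})\le O(n\delta^2\log m/\sigma^2)\cdot I(X^{(j_t,\cdot)};Y_t\mid V,Y_{<t})$, obtained by splitting the log-likelihood ratio $\Delta=\log(dP_{Y_t\mid V=+1}/dP_{Y_t\mid V=-1})$ at the truncation level $\log m$: the bulk contribution is controlled by a second-moment calculation on the Gaussian channel, while the tail is controlled by Gaussian concentration of the sample mean around $V\delta$ (the hypothesis $\delta^2\le\sigma^2\log m/n$ is exactly what keeps the tail event of probability $1/\mathrm{poly}(m)$ negligible). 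Summing over $t$ and using $\sum_t I(X^{(j_t,\cdot)};Y_t\mid V,Y_{<t})\le I(\vecX;Y\mid V,R_{\mathrm{pub}})=C$ yields $I(V;Y)\le O(n\delta^2\log m/\sigma^2)\cdot C$.

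\textbf{Combining and main obstacle.} The two steps together produce $\Omega(1)\le I(V;Y)\le O(n\delta^2\log m/\sigma^2)\cdot C$, which rearranges to $C\ge\Omega(\sigma^2/(n\delta^2\log m))$, exactly the claim; the hypothesis $\delta^2\le 1$ is only used to ensure the prior lies inside $[-1,1]$. The main obstacle is Step~2: the cleanest strong data-processing inequality for the Gaussian channel, Theorem~\ref{erkip1}, applies naturally to a jointly Gaussian pair $(V,X)$ but not to a Bernoulli prior on $V$, and the $\log m$ loss is precisely the price of bridging the two regimes by truncation. Calibrating the truncation threshold correctly, and ensuring the per-round contraction holds uniformly over the conditioning $Y_{<t}$ so that posteriors on $V$ in later rounds remain well-controlled, are the technically delicate steps; this is also exactly the slack that the Gaussian-prior refinement of Theorem~\ref{improved_lb_dim} is designed to eliminate.
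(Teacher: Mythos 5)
The paper does not actually prove Theorem~\ref{one_copy}: it is attributed to \cite{ZDJW13} and stated as ``implicit in the proof of Theorem 1'' there, with the only original remark being that conditioning the information cost on $\theta$ costs at most $H(\theta)\le 1$ bit. So the relevant comparison is against the argument in \cite{ZDJW13}, and your outline is a faithful reconstruction of it: a Le Cam two-point reduction from squared loss to testing $V\in\{\pm 1\}$, giving $I(V;Y)=\Omega(1)$, followed by a per-message strong data-processing bound showing $I(V;Y)\lesssim (n\delta^2\log m/\sigma^2)\cdot C$, where the $\log m$ factor is exactly the price of a truncation argument bridging the Bernoulli prior with the Gaussian likelihood ratio. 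Step~1 is rigorous as written (Fano's inequality would be slightly more direct than Pinsker for getting $I(V;Y)=\Omega(1)$ from a test with error $1/10$, but either works). Your assessment of where the difficulty lies --- the per-round contraction, the calibration of the truncation threshold at $\log m$ via the hypothesis $\delta^2\le \sigma^2\log m/n$, and the need to control the posterior on $V$ under the conditioning $Y_{<t}$ in the interactive setting --- matches the crux of the \cite{ZDJW13} argument and of why the present paper's Gaussian-prior refinement (Theorem~\ref{improved_lb_dim}) removes the $\log m$.

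Two small points worth tightening if you were to fill in Step~2. First, the chain-rule decomposition $\sum_t I(X^{(j_t,\cdot)};Y_t\mid V,Y_{<t})\le C$ should carry the conditioning on $R_{\mathrm{pub}}$ throughout, since $C=I(\vecX;Y\mid V,R_{\mathrm{pub}})$ in this paper's definition; otherwise the inequality does not literally match $C$. Second, and more substantively, the per-round inequality $I(V;Y_t\mid Y_{<t})\le O(n\delta^2\log m/\sigma^2)\cdot I(X^{(j_t,\cdot)};Y_t\mid V,Y_{<t})$ is not simply a pointwise application of a Gaussian strong data-processing inequality, because after conditioning on $Y_{<t}$ the posterior on $V$ is no longer uniform and the input to machine $j_t$'s channel is not the raw sample but the sample together with a (possibly $V$-correlated) transcript prefix; getting a contraction constant that holds uniformly over $Y_{<t}$ is exactly the delicate part of \cite{ZDJW13}'s Lemma, and a complete proof would need to carry out that calculation rather than cite the jointly-Gaussian Theorem~\ref{erkip1}, which you correctly note does not directly apply here.
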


The corollary below directly follows from Theorem \ref{one_copy} and Theorem \ref{direct_sum}. 
\begin{corollary}{\label{corr_d}} Let $\calV$ be the uniform distribution over $\{\pm \delta\}$, where $\delta^2 \le \min\left(1, \frac{\sigma^2 \log m}{n}\right)$. 
If $(\Pi,\htheta)$ solves the task $T(1,m,n,\sigma^2, \calV^d)$ with information cost $C$ and squared loss $R$, then either $C\ge \Omega\left(\frac{d \sigma^2}{\delta^2 n \log m}\right)$ or $R\ge d \delta^2/40$. 
\end{corollary}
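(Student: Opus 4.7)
The corollary is a direct composition of the two tools that have just been developed, so the plan is essentially mechanical. First I would invoke Theorem~\ref{direct_sum} on the hypothetical protocol $(\Pi,\vhtheta)$ for the $d$-dimensional task $T(d,m,n,\sigma^2,\calV^d)$ with information cost $C$ and squared loss $R$. This produces a one-dimensional protocol-estimator pair $(\Pi',\htheta)$ for $T(1,m,n,\sigma^2,\calV)$ whose information cost is at most $4C/d$ and whose squared loss is at most $4R/d$. The fact that $\calV^d$ is a product of identical marginals $\calV$ is what makes the reduction applicable, since the embedding in Protocol~\ref{Protocol1} requires drawing the redundant coordinates from the same one-dimensional prior.

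Next I would feed this derived pair $(\Pi',\htheta)$ into Theorem~\ref{one_copy}, which asserts the dichotomy
\[
\text{either } \ic(\Pi') \ge \Omega\!\left(\frac{\sigma^2}{\delta^2 n \log m}\right) \quad \text{or} \quad R(\Pi',\htheta) \ge \delta^2/10.
\]
The assumption $\delta^2 \le \min(1,\sigma^2\log m/n)$ from the corollary is exactly what is needed to invoke Theorem~\ref{one_copy}, so no extra hypothesis is required. Substituting the bounds from the previous step and solving for $C$ and $R$ yields the dichotomy in the corollary: either $4C/d \ge \Omega(\sigma^2/(\delta^2 n \log m))$, giving $C \ge \Omega(d\sigma^2/(\delta^2 n \log m))$, or $4R/d \ge \delta^2/10$, giving $R \ge d\delta^2/40$.

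There is essentially no obstacle here — every ingredient has been prepared. The only mild subtlety is ensuring that Theorem~\ref{one_copy} is stated in a form that tolerates the private randomness used in the construction of $\Pi'$, but this was already flagged in the remark preceding Theorem~\ref{one_copy}: the techniques of \cite{ZDJW13} are general enough to handle private randomness, and conditioning on the binary prior $\theta\in\{\pm\delta\}$ costs at most one bit of mutual information, which is absorbed into the $\Omega$. Thus the proof is a two-line application once the direct-sum theorem and the one-dimensional lower bound are in hand.
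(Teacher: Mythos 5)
Your proposal is correct and is exactly the paper's own argument: the paper derives Corollary~\ref{corr_d} by chaining Theorem~\ref{direct_sum} (to produce a one-dimensional protocol with information cost at most $4C/d$ and squared loss at most $4R/d$) with the dichotomy in Theorem~\ref{one_copy}, and you have simply spelled out that two-step composition, including the correct handling of the private-randomness caveat.
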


\noindent Then noting that the communication cost is always larger than information cost, we can simply convert Corollary~\ref{corr_d} into lower bound for communication cost, Corollary~\ref{dimension_lb}, restated below for convenience. 

\dimesionlb*

\begin{proof}Denote information cost of $(\Pi, \vhtheta)$ by $C$, and we have the trivial inequality $C\le B$. The rest of proof concerns only about how to choose the right prior $\delta$ and to convert the bounds on $C$ and $R$ in Corollary~\ref{corr_d} into a single nice formula here. In the most typical case, if we choose $\delta^2 = \Omega(\frac{d\sigma^2}{nB\log n})$, it follows Corollary~\ref{corr_d} that 
$$R \ge d \delta^2/40 \ge \Omega\left(\frac{d^2 \sigma^2 }{n B \log m}\right)$$ which captures the first term on the right hand side that we desired. 

However, there are several corner cases that require additional treatment. Formally, we divide into two cases depending on whether $B \ge \frac{1}{c} \cdot \max \left(\frac{d \sigma^2}{n \log m}, \frac{d}{\log^2 m}\right)$ or not, where $c > 1$ is a constant to be specified later.

If $B \ge \frac{1}{c} \cdot \max \left(\frac{d \sigma^2}{n \log m}, \frac{d}{\log^2 m}\right)$, choose $\delta^2 = \frac{1}{c} \cdot \frac{d \sigma^2}{n B \log m}$. We can check $\delta^2 \le \min\left(1, \frac{\sigma^2 \log m}{n}\right)$, therefore we are ready to apply Corollary~\ref{corr_d}. By the definition of $\delta$, we can check 
$
C \le B = \frac{1}{c} \cdot \frac{d \sigma^2}{\delta^2 n \log m}. 
$
Choose $c$ large enough such that this violates the lower bound $C = \Omega(\frac{d \sigma^2}{\delta^2 n \log m})$ in Corollary \ref{corr_d}. Therefore,  the other possible outcome of Corollary~\ref{corr_d} must be true, that is,  $$R \ge d \delta^2/40 \ge \Omega\left(\frac{d^2 \sigma^2 }{n B \log m}\right)$$. 

On the other hand, if $B \le \frac{1}{c} \cdot \max \left(\frac{d \sigma^2}{n \log m}, \frac{d}{\log^2 m}\right)$, choose $\delta^2 = \frac{d \sigma^2}{n \max \left(\frac{d \sigma^2}{n \log m}, \frac{d}{\log^2 m}\right) \log m}$.  Again $\delta^2 \le \min\left(1, \frac{\sigma^2 \log m}{n}\right)$ and by the definition of $\delta$, 
$$
C \le B \le \frac{1}{c} \cdot \max \left(\frac{d \sigma^2}{n \log m}, \frac{d}{\log^2 m}\right) = \frac{1}{c} \cdot \frac{d \sigma^2}{\delta^2 n \log m}
$$
Hence $R \ge d \delta^2/40 \ge \Omega\left(\min \left\{\frac{d\sigma^2}{n \log m}, d\right\}\right)$. 

Combining the two cases, we get 
$$
R \ge \Omega\left(\min \left\{\frac{d^2 \sigma^2 }{n B \log m}, \frac{d\sigma^2}{n \log m}, d\right\}\right)
$$
\end{proof}

\section{Proof of Theorem~\ref{thm:sparse-upper}}

Let $S =\supp(\vectheta)$. By sparsity of $\vectheta$,  we have $|S|\le s$.  For each $i\not\in S$, $$\Exp[(\vhtheta_i - \vectheta_i)^2] = \Exp[\vhtheta_i^2] = \Pr[|\bar{X}_i |^2> \alpha\sigma^2/(mn)] \Exp[\bar{X}_i^2\mid |\bar{X}_i|^2 > \alpha\sigma^2/(mn)] < o(1/d^2)\cdot\frac{\alpha\sigma^2}{mn}$$

The last inequality follows the fact that the distribution of $\bar{X}_i$ is $\mathcal{N}(0,\frac{\alpha\sigma^2}{mnL\log d})$.  

For any $i \in S$, we know that $\vhtheta_i \in \{\bar{X}_i, 0\}$, therefore, 
 $$\Exp[(\vhtheta_i - \vectheta_i)^2]  \le \Exp[(\bar{X}_i - \vectheta_i)^2\mid \vhtheta_i = \bar{X}_i]\Pr[\vhtheta_i = \bar{X}_i] + \vectheta_i^2\Pr[\vhtheta_i = 0] $$
 
The first term in RHS can be bounded by 
$$\Exp[(\bar{X}_i - \vectheta_i)^2\mid \vhtheta_i = \bar{X}_i]\Pr[\vhtheta_i = \bar{X}_i] \le \Exp[(\bar{X}_i - \vectheta_i)^2] \le \frac{\alpha\sigma^2}{mn}$$

For the second term, assuming wlog $\vectheta_i > 0$, it is equal to $\vectheta_i^2 \Phi(\left(\vectheta_i-\sqrt{\frac{\alpha\sigma^2}{mn}}\right)\cdot \sqrt{\frac{Lmn\log d}{\alpha\sigma^2}})$, which is upper bounded by $O(\frac{\alpha\sigma^2}{mn})$ when $L$ is sufficiently large constant. 

Therefore, when $i\in S$, we have that  $\Exp[(\vhtheta_i - \vectheta_i)^2]  \le O(\frac{\alpha\sigma^2}{mn})$. Putting all dimensions together, 

$$\Exp[||\vhtheta - \vectheta||^2] =\sum_{i\in S}\Exp[(\vhtheta_i - \vectheta_i)^2] + \sum_{i \not \in S}\Exp[(\vhtheta_i - \vectheta_i)^2]\le O\left(\frac{\alpha s\sigma^2}{mn}\right)$$

Finally, the communication cost is clearly $O((dm\log m\log d)/\alpha)$ since totally $O((m\log d)/\alpha)$ $d$-dimensional vectors have been communicated. 
\section{Improved upper bound: proof of theorem \ref{improved_upperbd_prelim}}

\begin{Protocol}[ht]
\textbf{Inputs} : Machine $j$ gets samples $X^{(j,1)},\dots, X^{(j,n)}$ distributed according to $\calN(\theta, \sigma^2)$, where $\theta \in [-1,1]$. \\
\vspace{0.05in}

Each machine calculates its sample mean $\bar{X}^{(j)} = (X^{(j,1)}+\dots+X^{(j,n)})/n$\\
The fusion center maintains global variables $L$, $U$, $\ell$, $p$ and broadcasts them if they are updated. \\
Initially, $U \leftarrow 1$, $L\leftarrow -1$, $\ell \leftarrow 0$, $p = 0.1m^{-3/2}$\\
\\
\textbf{While} $U - L\ge 1/\sqrt{m}$ \\ 
\begin{itemize}
\item $a \leftarrow (U+L)/2$
\item Each machine $j \in$ $\{\ell+1, \ell+1,\ldots, \ell + \frac{50\log(2/p)}{\sigma^2(U-L)^2} \}$ sends whether $m^{j} = 1$ if $\bar{X}^{(j)} \ge a$ otherwise 0.  
\item If the majority of $m^{j}$ for $j \in$ $\{\ell+1, \ell+1,\ldots, \ell + \frac{50\log(2/p)}{\sigma^2 (U-L)^2} \}$ is 1, then $L \leftarrow (L+a)/2$. Otherwise $U \leftarrow (U+a)/2$. 
\item $\ell \leftarrow \ell + \frac{50\log(1/p)}{\sigma^2 (U-L)^2}$, $p = p \cdot \left(\frac{4}{3}\right)^3$.
\end{itemize}
\textbf{end} \\
\\
\textbf{Output} $L$ 
\caption{Improved Interactive Protocol for One-dimensional Gaussian Mean Estimation}
\label{Protocol6}
\end{Protocol}

For simplicity, and without loss of generality, we only prove the case when $n = 1$ and $\sigma = 1$. In this case, each machine gets one sample from $\mathcal{N}(\theta,1)$. Our goal is to prove that Protocol~\ref{Protocol6} has communication cost $O(m)$ and mean-squared loss $O(1/m)$

Before going into the proof, we provide some justification for making the error probability of each round exponentially decreasing. Intuitively,  when the interval $[L,U]$ is small, we may allow slightly larger failure probability since even we fail, the squared loss caused won't be large given $[L,U]$ is small. It turns out the right tradeoff is to increase the error probabilities exponentially as the approximation of $\theta$ gets better for two reasons: 1) the squared loss is affected more if the protocol fails early when the estimate is still coarse so we want the failure probability in the early iteration to be very small 2) the number of samples needed for the coarse approximation is small so it is cheaper to decrease the failure probability of the early iterations than that of the late iterations. 

Let $\Phi(x)$ be the c.d.f for normal distribution $\mathcal{N}(0,1)$. We will need the following simple lemma $\Phi(x)$ which is essentially the fact that the p.d.f of normal distribution is close to a constant around 0. We delay the proof of the lemma to the end of the section. 

\begin{lemma}{\label{gaussian_est}}
For $0 \le t \le 1$, we have $\Phi(t)\ge 1/2 + t/4$. 
\end{lemma}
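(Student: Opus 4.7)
The plan is to work directly with the definition $\Phi(t) - \Phi(0) = \int_0^t \varphi(x)\,dx$ where $\varphi(x)=\frac{1}{\sqrt{2\pi}}e^{-x^2/2}$, and since $\Phi(0)=1/2$, the claim reduces to showing $\int_0^t \varphi(x)\,dx \ge t/4$ for $t\in[0,1]$. The natural first attempt is monotonicity: since $\varphi$ is decreasing on $[0,1]$, we have $\int_0^t \varphi(x)\,dx \ge t\,\varphi(t) \ge t\,\varphi(1) = \frac{t}{\sqrt{2\pi}}e^{-1/2}$. However, a quick numerical check gives $\frac{1}{\sqrt{2\pi}}e^{-1/2}\approx 0.242 < 1/4$, so this coarse bound falls just short, which is the real obstacle to a one-line proof.

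To fix this, I would use the elementary inequality $e^{-y}\ge 1-y$ (valid for all real $y$, and in particular for $y=x^2/2\ge 0$) to lower bound the Gaussian density by a polynomial. This yields $\varphi(x) \ge \frac{1}{\sqrt{2\pi}}\bigl(1 - x^2/2\bigr)$, and integrating termwise gives
\[
\Phi(t)-\tfrac{1}{2} \;=\; \int_0^t \varphi(x)\,dx \;\ge\; \frac{1}{\sqrt{2\pi}}\left(t - \frac{t^3}{6}\right).
\]
For $t\in[0,1]$, the factor $t - t^3/6 \ge t(1-1/6) = 5t/6$, so the lower bound becomes $\frac{5t}{6\sqrt{2\pi}}$.

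It remains to verify the numerical inequality $\frac{5}{6\sqrt{2\pi}} \ge \frac{1}{4}$, which is equivalent to $400 \ge 72\pi$; since $72\pi \approx 226 < 400$, this holds with room to spare, finishing the proof. The entire argument is short, and the only subtlety is noticing that the naive monotonicity bound is insufficient and that replacing the exponential by its first-order lower bound $1-x^2/2$ recovers enough slack. No deep machinery is needed, and no concepts beyond Calculus I.
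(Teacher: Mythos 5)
Your proof is correct and takes essentially the same approach as the paper: lower-bound $e^{-x^2/2}$ by $1 - x^2/2$, integrate to obtain $\frac{1}{\sqrt{2\pi}}\left(t - t^3/6\right)$, and conclude. The only difference is that you spell out the final numerical step ($t - t^3/6 \ge 5t/6$ for $t\in[0,1]$, and $\frac{5}{6\sqrt{2\pi}} \ge \frac14$) which the paper leaves implicit.
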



Note that initially $U - L  = 2$ and in each iteration, $U- L$ decreases by a factor of $3/4$, therefore the number of iterations is at most $T = \log_{4/3}(2\sqrt{m})$. Let $U_0 = 1$, $L_0 = -1$ and $U_s, L_s$ be the value of $U$ and $L$ after $s$ iterations, and let $t_s = U_s - L_s$. Also denote the value of $p$ after $s$ iterations as $p_s$. Therefore, by the definition of the protocol, $t_s = 2\cdot (3/4)^s$ and $p_s = (4/3)^{3s} \cdot 0.1m^{-3/2}$. 

We thought $p_s$ a the failure probability we would like to tolerate for iteration $s$. We make this formal by defining $E_s$ be the indicator variable for the event that $\theta \in [L_s,U_s]$, that is, the event that the protocol outputs a valid interval that contains $\theta$ after $s$ iteration. We claim that  


\begin{claim}{\label{claim1}}
$\Pr[E_{s+1} = 0 | E_s = 1] \le p_s$
\end{claim}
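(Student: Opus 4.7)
The plan is to reduce $\{E_{s+1}=0\}$, conditional on $E_s = 1$, to a majority-vote failure of the $k_s := 50\log(2/p_s)/(\sigma^2 t_s^2)$ newly participating machines, and then to bound that failure probability with Hoeffding's inequality on independent Bernoulli trials. Concretely, I would partition $[L_s, U_s]$ into the middle half $[L_s + t_s/4,\, U_s - t_s/4]$ and the two extreme quarters. The two candidate updates for $[L_{s+1}, U_{s+1}]$, namely $[L_s + t_s/4,\, U_s]$ and $[L_s,\, U_s - t_s/4]$, both contain every $\theta$ in the middle half, so $\{E_{s+1}=0\}$ can only occur when $\theta$ lies in one of the two extreme quarters. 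By symmetry it suffices to analyze $\theta \in [L_s,\, L_s + t_s/4)$; in that case $\{E_{s+1}=0\}$ is exactly the event that the majority of the fresh bits $m^{(j)} = \mathbf{1}[\bar{X}^{(j)} \ge a]$ equals $1$, where $a = (L_s + U_s)/2$.

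Next I would compute the bias of each $m^{(j)}$. Specializing to $n=1, \sigma = 1$ as the proof does, $\bar{X}^{(j)} \sim \calN(\theta, 1)$ are independent across $j$, so each $m^{(j)}$ is Bernoulli with mean $\Phi(\theta - a) \le \Phi(-t_s/4)$, since $a - \theta \ge t_s/4$. Applying Lemma~\ref{gaussian_est} to $t = t_s/4 \le 1$ (valid since $t_s \le 2$) bounds this mean by $1/2 - t_s/16$. Hoeffding's inequality then gives
\[
\Pr[\text{majority of } m^{(j)} \text{ is } 1] \le \exp\!\left(-2 k_s (t_s/16)^2\right) = \exp\!\left(-k_s t_s^2 / 128\right),
\]
which, after plugging in $k_s$ from the protocol, should be at most $p_s$. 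Because the bound is uniform over $\theta$ in the left quarter, it survives averaging over the conditional distribution of $\theta$ given $E_s = 1$; the right-quarter case is symmetric, and the general $(n,\sigma)$ case follows by the same calculation with $\bar{X}^{(j)} \sim \calN(\theta, \sigma^2/n)$, which is exactly why $k_s$ is scaled by $1/(\sigma^2 t_s^2)$ in the protocol.

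The main obstacle I anticipate is constant tracking. Lemma~\ref{gaussian_est}'s inequality $\Phi(t) - 1/2 \ge t/4$, combined with the quarter-point gap $t_s/4$ and Hoeffding's squaring, produces an exponent of only $k_s t_s^2/128$, while driving $\exp(-\,\cdot\,) \le p_s$ requires an exponent of at least $\log(1/p_s)$; with $k_s = 50\log(2/p_s)/(\sigma^2 t_s^2)$ this yields merely $(p_s/2)^{50/128}$. I would close this gap either by inflating the numerical constant $50$ in the protocol's sample count to a sufficiently large value, or by sharpening the pointwise estimate on $\Phi(t) - 1/2$ (using the minimum of $\phi$ on $[0,1]$, which is strictly larger than $1/4$) and applying Chernoff's multiplicative bound in place of Hoeffding's additive form.
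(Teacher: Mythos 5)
Your proposal reproduces the paper's argument step for step: condition on $E_s$, note that $E_{s+1}=0$ forces $\theta$ into one of the two extreme quarters, exploit symmetry, bound the bias of each vote $m^{(j)}$ using Lemma~\ref{gaussian_est}, and close with a Hoeffding/Chernoff tail bound. The constant mismatch you flag is a genuine defect of the paper's own proof, not of your reconstruction. From Lemma~\ref{gaussian_est} the bias gap is $\Phi(t_s/4)-1/2 \ge t_s/16$, so Hoeffding yields $\exp\!\left(-2t\,(t_s/16)^2\right) = \exp\!\left(-t\,t_s^2/128\right)$, not the $\exp\!\left(-t\,t_s^2/50\right)$ the paper asserts; the paper's proof also misquotes the sample count as $t = 50\,t_s^2 \log(2/p_s)$ when Protocol~\ref{Protocol6} actually uses $t = 50\log(2/p_s)/(\sigma^2 t_s^2)$. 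Your first remedy, inflating the constant $50$ in the protocol to at least $128$, is the clean fix and only perturbs the implicit constant in the $O(m)$ communication bound. Your second remedy has one slip: the minimum of the standard normal density $e^{-x^2/2}/\sqrt{2\pi}$ over $[0,1]$ is $e^{-1/2}/\sqrt{2\pi}\approx 0.242$, which is \emph{less} than $1/4$, not greater. However, the lemma is invoked only at $t = t_s/4 \le 1/2$, and on $[0,1/2]$ the density is at least $e^{-1/8}/\sqrt{2\pi}\approx 0.352 > 1/4$; this improves the Hoeffding exponent to roughly $t\,t_s^2/65$, which still falls short of $t\,t_s^2/50$ (indeed $t\,t_s^2/50$ would need gap $\ge t_s/10 = (2/5)(t_s/4)$, impossible since $\Phi'(0) = 1/\sqrt{2\pi} < 2/5$). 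So a modest increase of the protocol constant is unavoidable under either route; none of this affects the stated asymptotics, but you are right that the proof as written does not close.
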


\begin{proof}[Proof Of claim \ref{claim1}] Assuming $E_s$ happens, we know that $\theta \in [L_s, U_s]$. If $E_{s+1}$ doesn't happen, then there must be two cases: a) $\theta \in [L_s, (3L_s+U_s)/4]$, and the majority of the $m^j$'s at that iteration is 1. b) $\theta \in [(L_s+3U_s)/4, U_s]$, and the majority of the $m^j$'s at that iteration is 0. These two cases are symmetric and we only analyze the first one. Under case a), the probability that a single gussian sample from $\mathcal{N}(\theta,1)$ is less than $a = (U_s+ L_s)/2$ is $1 - \Phi(t_s/4) \le 1/2 - t_s/20$. 
Therefore by chernoff bound, probability that majority of $t$ independent samples from $\calN(\theta, 1)$ are greater than $(L_s+ U_s)/2$ is $\le e^{-t \cdot t_s^2/50}$. In the protocol, we have $t = 50 t_s^2 \cdot \log(2/p_s)$ and hence $e^{-t \cdot t_s^2/50} \le p_s/2$. 
\end{proof}


Then let's calculate the mean-squared loss and the communication cost. For squared loss, let $s$ be the smallest $s$ such that $E_{s+1} = 0$. In this case, the squared loss is at most $t_s^2$ since we know $\theta \in [L_s,U_s]$ and the final output will also be in this interval.  Note that $\Pr[E_s = 1, E_{s+1} = 0]\le \Pr[E_{s+1} = 0\mid E_s = 1] \le p_s$ by Claim~\ref{claim1}, therefore the expected square loss is at most
\begin{align*}
\textrm{total squared loss} \le \sum_{s=0}^{T} q_s t_s^2 &= \sum_{s=0}^{T} \left(\frac{4}{3}\right)^{3s} \cdot 1/10m^{3/2} \cdot  4 \cdot \left(\frac{3}{4}\right)^{2s} \\
&= \frac{4}{10m^{3/2}} \cdot \sum_{s=0}^{T} \left(\frac{4}{3}\right)^{s} \\
&= O(1/m)
\end{align*}
The total communication is simply
\begin{align*}
50 \cdot \sum_{s=0}^{T} t_s^2 \cdot \log(1/q_s) &= O \left( \sum_{s=0}^{T} \left(\frac{4}{3}\right)^{2s} \cdot \log \left( \left(\frac{3}{4}\right)^{3s} \cdot 10m^{3/2}\right)\right) \\
&= O \left( \sum_{s=0}^{T} \left(\frac{4}{3}\right)^{2s} \cdot \log \left( 10/8 \cdot \left( \frac{4}{3}\right)^{T - s} \right) \right) \\
&= O \left( \sum_{s=0}^{T} \left(\frac{4}{3}\right)^{T-s} \cdot \log \left( 10/8 \cdot \left( \frac{4}{3}\right)^{s} \right) \right) \\
&= O(m)
\end{align*}
The third equality is just a change of variable. The fourth equality follows from the fact that $\sum_{s=0}^{\infty} \left(\frac{3}{4}\right)^s \cdot s = O(1)$. Note that we have used $O(m)$ samples whereas we have only $m$ machines, but we can just increase $m$ by a constant factor, thereby incurring another constant factor in the expected square loss. 

\ignore{
\begin{Protocol}
Inputs : Machine $i$ gets sample $X^i$ distributed according to $\calN(\theta, 1)$, where $\theta \in [-1,1]$. \\
$\text{lb} \leftarrow -1$ \\
$\text{ub} \leftarrow 1$ \\
$\text{prob} \leftarrow \frac{1}{10m^{3/2}}$ \\
$j \leftarrow 1$ \\
\\
while ub $-$ lb $\ge \frac{1}{\sqrt{m}}$ \\
\begin{itemize}
\item a $\leftarrow$ (ub $+$ lb)/2 
\item Each machine $k \in$ $\{j, j+1,\ldots, j + \frac{50}{(\text{ub} - \text{lb})^2} \cdot \log(1/\text{prob}) -1\}$ sends whether $X^k < a$ or $X^k \ge a$. 
\item Take the majority of these answers. If the majority says $< a$, then update ub $\leftarrow (a+\text{ub})/2$. If the majority says $\ge a$, then update lb $\leftarrow (\text{lb}+a)/2$. 
\item update $j \leftarrow j + \frac{50}{(\text{ub} - \text{lb})^2} \cdot \log(1/\text{prob})$.
\item update prob $\leftarrow$ prob $\cdot \left(\frac{4}{3}\right)^3$. 
\end{itemize}
end \\
\\
output \text{lb}
\caption{$\Pi$}
\label{Protocol6}
\end{Protocol}
}


\begin{proof}[Proof of Lemma ~\ref{gaussian_est}]
\begin{align*}
\frac{1}{\sqrt{2\pi}}\int_{-\infty}^{t} \! e^{-x^2/2} \, \mathrm{d}x &= \frac{1}{\sqrt{2\pi}}\int_{-\infty}^{0} \! e^{-x^2/2} \, \mathrm{d}x + \frac{1}{\sqrt{2\pi}}\int_{0}^{t} \! e^{-x^2/2} \, \mathrm{d}x \\
&= 1/2 + \frac{1}{\sqrt{2\pi}}\int_{0}^{t} \! e^{-x^2/2} \, \mathrm{d}x \\
&\ge 1/2 + \frac{1}{\sqrt{2\pi}}\int_{0}^{t} \! (1-x^2/2) \, \mathrm{d}x \\
&= 1/2 + \frac{1}{\sqrt{2\pi}}\left(t - t^3/6\right) \\
&\ge 1/2 + t/4
\end{align*}
\end{proof}

\section{Improved lower bound: Proof of theorem \ref{improved_lb_dim}}

We will need the following theorem from \cite{erkip_cover}. 

\begin{theorem}{(\cite{erkip_cover}{\label{erkip}}, Theorem 7)} Suppose $X$ and $V$ are jointly gaussian random variables with correlation $\rho$. Let $Y \leftrightarrow X \leftrightarrow V$ be a markov chain with $I(Y;X) \le R$. Then $I(Y;V) \le \rho^2 R$. 
\end{theorem}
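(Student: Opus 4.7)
The plan is to derive this strong data-processing inequality from the conditional entropy power inequality (EPI) followed by a one-step Jensen argument. By rescaling (which preserves mutual information), I may assume $X$ and $V$ have unit variance, and write $V=\rho X+\sqrt{1-\rho^{2}}\,W$ where $W\sim\mathcal{N}(0,1)$ is independent of $X$. A key preparatory observation is that $W$ is in fact independent of $Y$, and moreover $X$ and $W$ are conditionally independent given $Y$. This follows from the Markov property $Y\leftrightarrow X\leftrightarrow V$: the joint density factors as $p(y,x,v)=p(x)p(y\mid x)p(v\mid x)$, and since $W=(V-\rho X)/\sqrt{1-\rho^{2}}$ is an affine function of $(X,V)$, one checks $p(y,x,w)=p(x)p(y\mid x)p(w)$, which yields both independence statements.

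Next I apply the conditional EPI to the decomposition $V=\rho X+\sqrt{1-\rho^{2}}\,W$ conditioned on $Y$, obtaining
\[
2^{2h(V\mid Y)} \;\ge\; 2^{2h(\rho X\mid Y)} + 2^{2h(\sqrt{1-\rho^{2}}W\mid Y)} \;=\; \rho^{2}\cdot 2^{2h(X\mid Y)} + 2\pi e(1-\rho^{2}),
\]
using the scaling identity $h(cZ)=h(Z)+\log|c|$ and the fact that $W\perp Y$ makes $h(\sqrt{1-\rho^{2}}W\mid Y)=\tfrac{1}{2}\log(2\pi e(1-\rho^{2}))$. Substituting $h(X\mid Y)=\tfrac{1}{2}\log(2\pi e)-I(Y;X)$ and then using $h(V)=\tfrac{1}{2}\log(2\pi e)$ yields
\[
I(Y;V)\;=\;h(V)-h(V\mid Y)\;\le\;-\tfrac{1}{2}\log\!\bigl(\rho^{2}\,2^{-2I(Y;X)}+(1-\rho^{2})\bigr).
\]

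To finish, I apply Jensen's inequality to the convex function $t\mapsto 2^{-2I(Y;X)\,t}$ with weights $\rho^{2}$ at $t=1$ and $1-\rho^{2}$ at $t=0$, which gives $\rho^{2}\,2^{-2I(Y;X)}+(1-\rho^{2})\ge 2^{-2\rho^{2}I(Y;X)}$; substituting back produces $I(Y;V)\le\rho^{2}I(Y;X)\le\rho^{2}R$, as claimed. The main obstacle is justifying the conditional EPI: the classical EPI gives $2^{2h(\rho X+\sqrt{1-\rho^{2}}W\mid Y=y)}\ge 2^{2h(\rho X\mid Y=y)}+2^{2h(\sqrt{1-\rho^{2}}W\mid Y=y)}$ pointwise in $y$, and one then averages over $y$ and uses convexity of the log-sum-exp map (in the form $\mathbb{E}_{y}\log(e^{a(y)}+e^{b(y)})\ge \log(e^{\mathbb{E}a}+e^{\mathbb{E}b})$) to pass the expectation inside the logarithm in the correct direction. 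The conditional independence of $X$ and $W$ given $Y$, established in the first paragraph, is exactly what makes this pointwise application of the EPI legitimate.
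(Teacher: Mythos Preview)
The paper does not actually prove this theorem: it is quoted verbatim from \cite{erkip_cover} and used as a black box to derive Lemma~\ref{one_machine}. So there is no ``paper's own proof'' to compare against; your write-up is a standalone proof of the cited result.

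Your argument is correct. The reduction to $V=\rho X+\sqrt{1-\rho^{2}}\,W$ with $W\perp X$, together with the Markov factorization $p(y,x,v)=p(y\mid x)p(x)p(v\mid x)$, does give $p(y,x,w)=p(y\mid x)p(x)p(w)$, hence $W\perp Y$ and $X\perp W\mid Y$, which is precisely the hypothesis needed to apply the classical EPI conditionally on $Y=y$. Averaging in $y$ via the convexity of $(a,b)\mapsto\log(e^{a}+e^{b})$ (Jensen in the right direction) yields the conditional EPI as you state it, and the final Jensen step on $t\mapsto 2^{-2I(Y;X)t}$ is standard. The only cosmetic omissions are the degenerate cases $\rho=0$ (then $V\perp X$ hence $V\perp Y$, so $I(Y;V)=0$) and $|\rho|=1$ (then $V=\pm X$ and the bound is trivial), where the logarithms in your derivation blow up but the conclusion is immediate. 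This EPI route is in fact essentially the approach taken in the Erkip--Cover reference itself, so you have reconstructed the original proof rather than found an alternative one.
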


We prove a slight generalization of the above theorem which we'll need for our lower bound. 

\begin{lemma}{\label{one_machine}} Suppose $V \sim \calN(0, \delta^2)$. Let $Z_1,\ldots, Z_n$ be iid gaussians with mean $0$ and variance $\sigma^2$, and $X^i = V + Z_i$. If $Y \leftrightarrow X^1,\ldots, X^n \leftrightarrow V$ is a markov chain s.t. $I(Y; X^1,\ldots, X^n) \le R$, then $I(Y; V) \le \frac{n \delta^2}{\sigma^2 + n\delta^2} R$. 
\end{lemma}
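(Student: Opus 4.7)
\textbf{Proof proposal for Lemma~\ref{one_machine}.} The plan is to reduce to the jointly-Gaussian case of Theorem~\ref{erkip} by passing through the sufficient statistic. Let $S := \frac{1}{n}\sum_{i=1}^n X^i = V + \frac{1}{n}\sum_i Z_i$. Then $(V,S)$ are jointly Gaussian with
\[
\mathrm{Var}(V)=\delta^2,\qquad \mathrm{Var}(S)=\delta^2+\sigma^2/n,\qquad \mathrm{Cov}(V,S)=\delta^2,
\]
so their squared correlation is $\rho^2 = \delta^2/(\delta^2+\sigma^2/n) = n\delta^2/(\sigma^2+n\delta^2)$, which is exactly the target factor. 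So if I can set up the hypotheses of Theorem~\ref{erkip} with the pair $(S,V)$ and $Y$, I am done.

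The main step is to verify the Markov chain $Y \leftrightarrow S \leftrightarrow V$. This should follow from two facts combined: (i) the given chain $Y \leftrightarrow (X^1,\dots,X^n)\leftrightarrow V$, which yields $I(Y;V\mid X^1,\dots,X^n)=0$, and (ii) Gaussian sufficiency of the sample mean, i.e.\ $V \leftrightarrow S \leftrightarrow (X^1,\dots,X^n)$, which yields $I(V;X^1,\dots,X^n\mid S)=0$. Then by the chain rule,
\[
I(Y,X^1,\dots,X^n;V\mid S) = I(X^1,\dots,X^n;V\mid S) + I(Y;V\mid S,X^1,\dots,X^n) = 0,
\]
and dropping the $(X^1,\dots,X^n)$ term (nonnegativity of mutual information) gives $I(Y;V\mid S)=0$, i.e.\ the desired Markov chain. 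Here the only slightly subtle point is fact (ii): the residuals $X^i-S$ are functions of $Z_1,\dots,Z_n$ that are independent of $V$ (because the sample mean and the deviations from the mean are uncorrelated, hence independent, Gaussians), so conditional on $S$ the full vector $(X^1,\dots,X^n)$ is independent of $V$. This is the one place I expect any care, but it is a standard Gaussian computation.

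Once the Markov chain is established, the data-processing inequality gives $I(Y;S)\le I(Y;X^1,\dots,X^n)\le R$, since $S$ is a (deterministic) function of $(X^1,\dots,X^n)$. Now $(S,V)$ is jointly Gaussian and $Y\leftrightarrow S \leftrightarrow V$ with $I(Y;S)\le R$, so Theorem~\ref{erkip} applies and yields
\[
I(Y;V) \le \rho^2 R \;=\; \frac{n\delta^2}{\sigma^2+n\delta^2}\,R,
\]
which is exactly the claimed bound. The only genuine content beyond Theorem~\ref{erkip} is identifying $S$ as the right single-variable summary and verifying that conditioning on it decouples $V$ from the raw data; everything else is definitional.
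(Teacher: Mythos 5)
Your proof is correct and follows essentially the same route as the paper's: both identify a one-dimensional Gaussian summary of the data (you use the sample mean $S$, the paper uses the linearly equivalent $\frac{\delta^2}{\sigma^2+n\delta^2}\sum_i X^i$), establish that $Y\leftrightarrow S\leftrightarrow V$ is a Markov chain via sufficiency, bound $I(Y;S)\le R$ by data processing, and then invoke Theorem~\ref{erkip}. Your derivation of $\rho^2$ directly from variances and covariances is a bit cleaner than the paper's explicit posterior-density calculation, and your chain-rule argument for $I(Y;V\mid S)=0$ makes explicit a step the paper leaves implicit, but the underlying idea is identical.
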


\begin{proof} Consider the density of $v$ conditioned on $x^1,\ldots, x^n$. Let $\bar{x} = \sum_{i=1}^n x^i$. 
\begin{align*}
p(v | x^1,\ldots, x^n) &= \frac{e^{-v^2/2 \delta^2} \cdot e^{-\sum_{i=1}^n (x^i-v)^2/2\sigma^2}}{\int_{-\infty}^{\infty} \! e^{-v^2/2 \delta^2} \cdot e^{-\sum_{i=1}^n (x^i-v)^2/2\sigma^2} \, \mathrm{d}v} \\
&= \frac{e^{-v^2/2 \delta^2} \cdot e^{\bar{x} v/\sigma^2 - nv^2/2\sigma^2}}{\int_{-\infty}^{\infty} \! e^{-v^2/2 \delta^2} \cdot e^{\bar{x} v/\sigma^2 - nv^2/2\sigma^2} \, \mathrm{d}v} \\
&= \frac{e^{-v^2/2 \delta^2} \cdot e^{\bar{x} v/\sigma^2 - nv^2/2\sigma^2}}{e^{\frac{\bar{x}^2 \delta^2}{2\sigma^2(\sigma^2+n\delta^2)}} \cdot \int_{-\infty}^{\infty} \! e^{- (v-\frac{\bar{x} \delta^2}{\sigma^2+n\delta^2})/2\frac{\delta^2\sigma^2}{\sigma^2+n\delta^2}} \, \mathrm{d}v} \\
&= \frac{e^{- (v-\frac{\bar{x} \delta^2}{\sigma^2+n\delta^2})/2\frac{\delta^2\sigma^2}{\sigma^2+n\delta^2}}}{\int_{-\infty}^{\infty} \! e^{- (v-\frac{\bar{x} \delta^2}{\sigma^2+n\delta^2})/2\frac{\delta^2\sigma^2}{\sigma^2+n\delta^2}} \, \mathrm{d}v}
\end{align*}
Thus the distribution of $v | x^1,\ldots,x^n$ is $\calN(\frac{\bar{x} \delta^2}{\sigma^2+n\delta^2}, \frac{\delta^2\sigma^2}{\sigma^2+n\delta^2})$, and hence also the distribution of $v | \bar{x}$ is $\calN(\frac{\bar{x} \delta^2}{\sigma^2+n\delta^2}, \frac{\delta^2\sigma^2}{\sigma^2+n\delta^2})$.  Moreover, the distribution of $\bar{x}$ is $\calN(0,n(\sigma^2+n\delta^2))$ and hence of $\frac{\bar{x} \delta^2}{\sigma^2+n\delta^2}$ is $\calN(0,\frac{n \delta^4}{\sigma^2+n\delta^2})$. Hence $V$ and $\frac{(\sum_{i=1}^n X^i) \delta^2}{\sigma^2+n\delta^2}$ are jointly gaussian random variables with correlation $\rho = \frac{\sqrt{n\delta^2}}{\sqrt{\sigma^2+n\delta^2}}$. Also $Y \leftrightarrow X^1,\ldots, X^n \leftrightarrow \frac{(\sum_{i=1}^n X^i) \delta^2}{\sigma^2+n\delta^2} \leftrightarrow V$ is a markov chain. Data processing implies that $I(Y; \frac{(\sum_{i=1}^n X^i) \delta^2}{\sigma^2+n\delta^2}) \le I(Y;X^1,\ldots,X^n) \le R$. Hence applying theorem \ref{erkip}, we get that $I(Y; V) \le \frac{n \delta^2}{\sigma^2 + n\delta^2} R$. 
\end{proof}

\noindent An easy corollary is the following:

\begin{corollary}{\label{cicost}}
Suppose $V \sim \calN(0, \delta^2)$. Let $Z_1,\ldots, Z_n$ be iid gaussians with mean $0$ and variance $\sigma^2$, and $X^i = V + Z_i$. If $Y \leftrightarrow X^1,\ldots, X^n \leftrightarrow V$ is a markov chain, then $I(Y;V) \le \frac{n\delta^2}{\sigma^2} \cdot I(Y; X^1,\ldots,X^n | V)$. 
\end{corollary}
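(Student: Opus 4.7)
The plan is to bootstrap Lemma \ref{one_machine} by combining its unconditional mutual information bound with the Markov chain structure via the chain rule. The key observation is that $I(Y;X^1,\ldots,X^n)$ can be split into a $V$-dependent part and a $V$-independent part, and solving the resulting inequality for $I(Y;V)$ converts the constant $\frac{n\delta^2}{\sigma^2+n\delta^2}$ appearing in Lemma \ref{one_machine} into the desired $\frac{n\delta^2}{\sigma^2}$.

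First I would abbreviate $X = (X^1,\ldots,X^n)$ and expand $I(Y;X,V)$ two different ways via the chain rule:
\begin{align*}
I(Y;X,V) &= I(Y;X) + I(Y;V \mid X) \\
I(Y;X,V) &= I(Y;V) + I(Y;X \mid V).
\end{align*}
Because $Y \leftrightarrow X \leftrightarrow V$ is a Markov chain, $I(Y;V\mid X)=0$, so equating the two expressions yields the identity $I(Y;X) = I(Y;V) + I(Y;X\mid V)$.

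Next I would apply Lemma \ref{one_machine} with $R := I(Y;X)$, which gives
\[
I(Y;V) \;\le\; \frac{n\delta^2}{\sigma^2+n\delta^2}\,I(Y;X) \;=\; \frac{n\delta^2}{\sigma^2+n\delta^2}\bigl(I(Y;V)+I(Y;X\mid V)\bigr).
\]
Moving the $I(Y;V)$ term to the left and using $1 - \tfrac{n\delta^2}{\sigma^2+n\delta^2} = \tfrac{\sigma^2}{\sigma^2+n\delta^2}$, the common factor $\sigma^2+n\delta^2$ cancels and I obtain $I(Y;V) \le \tfrac{n\delta^2}{\sigma^2}\,I(Y;X\mid V)$, which is precisely the claim.

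There is no real obstacle here: the only subtle point is verifying that Lemma \ref{one_machine} is indeed applicable (its hypotheses are literally the hypotheses of the corollary), and that $I(Y;V)$ is finite so the subtraction step is legitimate — finiteness follows from $I(Y;V)\le I(Y;X)<\infty$ (when it isn't, the inequality is vacuous and one can argue by a truncation/limiting argument). The whole reduction is essentially a one-line manipulation once the chain rule identity and Lemma \ref{one_machine} are in hand.
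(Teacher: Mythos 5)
Your proof is correct and is essentially the same as the paper's: both rely on the Markov-chain identity $I(Y;X\mid V)=I(Y;X)-I(Y;V)$ together with Lemma~\ref{one_machine} applied with $R=I(Y;X)$, followed by the same algebraic rearrangement. The only cosmetic difference is that you derive the identity via a double chain-rule expansion of $I(Y;X,V)$, whereas the paper invokes it directly.
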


\begin{proof}
Since $Y \leftrightarrow X^1,\ldots, X^n \leftrightarrow V$ is a markov chain, $I(Y; X^1,\ldots,X^n | V) = I(Y; X^1,\ldots, X^n) - I(Y;V)$. Since by lemma \ref{one_machine}, $ I(Y; X^1,\ldots, X^n) \ge \frac{\sigma^2 + n\delta^2}{n \delta^2} \cdot I(Y;V)$, we get $I(Y; X^1,\ldots,X^n | V) \ge \frac{\sigma^2}{n \delta^2} I(Y;V)$, or $I(Y;V) \le \frac{n\delta^2}{\sigma^2} \cdot I(Y; X^1,\ldots,X^n | V)$. 
\end{proof}

\noindent This leads to the following lemma: 

\begin{lemma}{\label{info_decay}}
If $\Pi$ is a simultaneous protocol for $m$ machines, where machine $i$ gets $n$ samples $X^{(i,1)},\ldots, X^{(i,n)} \sim \calN(V, \sigma^2)$, where $V \sim \calN(0,\delta^2)$. Then the information cost of the protocol $\Pi$, $I$ satisfies $I(Y;V) \le \frac{n\delta^2}{\sigma^2} \cdot I$, where $Y$ is the transcript of the protocol $\Pi$. 
\end{lemma}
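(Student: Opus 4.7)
The plan is to reduce this lemma to the single-machine statement in Corollary \ref{cicost} by exploiting the fact that in a simultaneous protocol the joint transcript decomposes across machines. Condition on the public randomness $R_{\text{pub}}$ throughout (so all entropies/mutual informations below are conditioned on $R_{\text{pub}}$; since $R_{\text{pub}}$ is independent of $V$ and $X$ this does not affect the statement). Write $Y = (Y_1, \dots, Y_m)$, where $Y_i$ is machine $i$'s broadcast, a (randomized) function of its own samples $X^{(i,*)} = (X^{(i,1)}, \dots, X^{(i,n)})$ and a private random string $R_{\text{priv},i}$. Since the $X^{(i,*)}$ are independent across $i$ conditional on $V$, and the $R_{\text{priv},i}$ are mutually independent (and independent of $V$), the messages $Y_1, \dots, Y_m$ are mutually independent conditional on $V$.

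First I would upper bound $I(Y;V)$ by a sum of per-machine mutual informations. By subadditivity of entropy combined with the conditional independence of the $Y_i$ given $V$,
\begin{equation*}
I(Y;V) \;=\; H(Y) - H(Y\mid V) \;\le\; \sum_{i=1}^{m} H(Y_i) - \sum_{i=1}^{m} H(Y_i \mid V) \;=\; \sum_{i=1}^{m} I(Y_i;V).
\end{equation*}
Next, for each $i$ the chain $Y_i \leftrightarrow X^{(i,*)} \leftrightarrow V$ is Markov: given $X^{(i,*)}$, the message $Y_i$ is a function of the independent private string $R_{\text{priv},i}$ only, so $Y_i \perp V \mid X^{(i,*)}$. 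Thus Corollary \ref{cicost} applies to each machine individually and gives $I(Y_i;V) \le \tfrac{n\delta^2}{\sigma^2} \, I(Y_i; X^{(i,*)} \mid V)$.

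It remains to reassemble the per-machine conditional information costs into the global $I$. Here I would observe that conditional on $V$, the pairs $(X^{(i,*)}, Y_i)$ are mutually independent across $i$, and that $H(Y_i \mid V, X) = H(Y_i \mid V, X^{(i,*)})$ (since $Y_i$ only touches $X^{(i,*)}$ and independent randomness). Together these yield
\begin{equation*}
I(Y; X \mid V) \;=\; H(Y \mid V) - H(Y \mid V, X) \;=\; \sum_{i=1}^{m} \bigl[ H(Y_i\mid V) - H(Y_i \mid V, X^{(i,*)}) \bigr] \;=\; \sum_{i=1}^{m} I(Y_i; X^{(i,*)} \mid V),
\end{equation*}
so summing the per-machine bounds above gives $I(Y;V) \le \tfrac{n\delta^2}{\sigma^2} I(Y;X\mid V) = \tfrac{n\delta^2}{\sigma^2} I$, as desired.

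The only real content is Corollary \ref{cicost}; the rest is bookkeeping of the conditional independences enforced by the simultaneous structure (messages don't see each other, and private randomness is both local and independent of $V$). The step that would deserve the most care in a careful write-up is the identity $I(Y;X\mid V)=\sum_i I(Y_i;X^{(i,*)}\mid V)$, since a naive chain-rule expansion conditions on earlier $Y_j$'s and needs the conditional independence of the $Y_i$'s given $V$ to drop back to the clean per-machine form; this is precisely where simultaneity (as opposed to interactivity) is used, and is the reason the lemma — and hence the lower bound — is stated only for simultaneous protocols.
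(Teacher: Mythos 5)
Your proof is correct and follows essentially the same route as the paper's: decompose the transcript per machine using simultaneity, apply Corollary~\ref{cicost} to each machine, use conditional independence of $(X^{(i,*)},Y_i)$ pairs given $V$ to tensorize the information cost, and bound $I(Y;V)\le\sum_i I(Y_i;V)$ via conditional independence of the $Y_i$ given $V$. The one small difference is that you derive the last inequality directly from subadditivity of entropy together with $H(Y\mid V)=\sum_i H(Y_i\mid V)$, while the paper invokes Proposition~\ref{prop:ind}; your explicit derivation is actually cleaner since that proposition, as stated, gives the superadditivity direction for the conditionally independent variable on the left of the semicolon rather than the subadditivity you need here.
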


\begin{proof}
Since $\Pi$ is a simultaneous protocol, machine $i$ sends a message $Y^i$ based on $X^{(i,1)},\ldots, X^{(i,n)}$. Suppose $X^i$ denote $X^{(i,1)},\ldots,X^{(i,n)}$. Then by corollary \ref{cicost}, we have that $I(Y^i;V) \le \frac{n\delta^2}{\sigma^2} \cdot I(Y^i; X^i | V)$. The information cost of the protocol $\Pi$ is $I(Y^1,\ldots, Y^n ; X^1, \ldots, X^n | V)$. Note that $(Y^1, X^1),\ldots, (Y^n, X^n)$ are independent conditioned on $V$. This gives us: 
\begin{align*}
I &= I(Y^1,\ldots, Y^n ; X^1, \ldots, X^n | V) \\
&= \sum_{i=1}^n I(Y^i; X^i | V) \\
&\ge \frac{\sigma^2}{n\delta^2} \cdot \sum_{i=1}^n I(Y^i; V) 
\end{align*} 
To complete the proof of the lemma, we need to prove that $\sum_{i=1}^n I(Y^i; V) \ge I(Y;V)$, which follows from proposition \ref{prop:ind}.

\end{proof}

\noindent Now we have the tools to prove theorem \ref{improved_lb} about improved lower bound for gaussian mean estimation for simultaneous protocols. 

\begin{theorem}{\label{improved_lb}} Suppose $(\Pi,\htheta)$ estimates the mean of $\calN(\theta, \sigma^2)$, for all $\theta \in [-1,1]$, with 
mean-squared loss $R$, and  communication cost $B$, where $\Pi$ is a simultaneous protocol. Then  
$$R \ge \Omega\left(\min \left\{\frac{\sigma^2 }{n B }, 1\right\}\right)$$
\noindent As a corollary, to achieve the optimal mean-squared loss $R = \frac{\sigma^2}{m n}$, the communication cost $B$ is at least $\Omega(m)$. 
\end{theorem}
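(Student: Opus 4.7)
The plan is to combine Lemma~\ref{info_decay} (the strong data-processing bound for Gaussian channels) with Shannon's rate-distortion lower bound for a Gaussian source. The key idea is to replace the $\{\pm\delta\}$ prior of \cite{ZDJW13} by a Gaussian prior $V\sim\calN(0,\delta^2)$, so that the Erkip--Cover inequality can be invoked without any discretization loss (hence no $\log m$ factor).

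First, without loss of generality I may assume $R$ is smaller than a sufficiently small absolute constant $c_0$; otherwise $R\ge \Omega(1)$ already dominates the claimed bound $\Omega(\min\{\sigma^2/(nB),1\})$. I would then place the prior $V\sim\calN(0,\delta^2)$ with $\delta^2=4R$, so that $\delta\le 2\sqrt{c_0}$ is small. Because the protocol is only guaranteed to be accurate on $[-1,1]$, I would replace $\htheta$ by the clipped estimator $\tilde\htheta:=\Proj_{[-1,1]}(\htheta)$. Clipping only helps on $[-1,1]$, so $\mathbb{E}[(\tilde\htheta-\theta)^2\mid\theta]\le R$ for every $\theta\in[-1,1]$. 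On the other hand, $|\tilde\htheta-V|\le 1+|V|$ always, so the contribution from the event $\{|V|>1\}$ to the prior-averaged risk is at most $\mathbb{E}[(1+|V|)^2\mathbf{1}_{|V|>1}]$, which is exponentially small in $1/\delta^2$ and hence $o(R)$. Therefore
$$R':=\mathbb{E}_V\bigl[\mathbb{E}[(\tilde\htheta(Y)-V)^2\mid V]\bigr]\le R+o(R)\le 2R=\delta^2/2.$$

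Next I would invoke Shannon's lower bound on the rate-distortion function of a Gaussian source: for any random variable $\hat V$ with $\mathbb{E}[(\hat V-V)^2]\le R'$ and $V\sim\calN(0,\delta^2)$,
$$I(V;\hat V)\ge \tfrac{1}{2}\log(\delta^2/R')\ge \tfrac{1}{2}\log 2=\Omega(1).$$
Taking $\hat V=\tilde\htheta(Y)$ and using the data-processing inequality $I(V;\tilde\htheta(Y))\le I(V;Y)$ yields $I(Y;V)\ge\Omega(1)$. Finally, because $\Pi$ is simultaneous, Lemma~\ref{info_decay} gives
$$I(Y;V)\le \frac{n\delta^2}{\sigma^2}\cdot I\;\le\;\frac{n\delta^2}{\sigma^2}\cdot B\;=\;\frac{4nRB}{\sigma^2},$$
where the information cost $I$ is bounded by the communication cost $B$ via the source-coding inequality recalled in Section~\ref{sec:prelim}. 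Comparing the two bounds on $I(Y;V)$ gives $R\ge \Omega(\sigma^2/(nB))$, which combined with the trivial case $R\ge\Omega(1)$ yields the theorem. The corollary about $R=\sigma^2/(mn)$ then follows by rearrangement.

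The main technical obstacle is the mismatch between the Gaussian prior (needed to apply the Erkip--Cover inequality in its sharpest form) and the fact that the protocol's guarantee is only a supremum over $\theta\in[-1,1]$. Clipping the estimator and absorbing the Gaussian tail into a lower-order term is the cleanest way to reconcile the two, and this is exactly where the reduction to small $R$ is used.
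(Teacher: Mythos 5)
Your proposal follows essentially the same route as the paper: both replace the $\{\pm\delta\}$ prior with a Gaussian prior $V\sim\calN(0,\delta^2)$ (with $\delta^2=\Theta(R)$), invoke Lemma~\ref{info_decay} to get the strong data-processing bound $I(Y;V)\le (n\delta^2/\sigma^2)B$, and then derive a matching $I(Y;V)\ge\Omega(1)$ from the squared-loss guarantee. The only divergence is in the last step: the paper predicts the sign $S=\sign(V)$ from $\sign(\htheta(Y))$ and applies Fano's inequality, whereas you apply Shannon's rate-distortion lower bound for a Gaussian source directly to $\tilde\htheta(Y)$; these are interchangeable one-line arguments. Your handling of the $|V|>1$ Gaussian tail via clipping is actually more careful than what the paper writes out (the paper leaves that detail implicit when passing from the worst-case guarantee over $[-1,1]$ to the prior-averaged risk), so this is a sound filling-in rather than a deviation.
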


\begin{proof}
We can assume $R \le 1/100$, otherwise we are done. Consider a simulation of the protocol $\Pi$ where the mean $\theta$ is generated according to the distribution $\calN(0,\delta^2)$, where $\delta$ will be chosen appropriately. We'll denote by $V$, the random variable for the mean. If $Y$ denotes the transcript of the protocol, then by lemma \ref{info_decay}, we have $I(Y;V) \le \frac{n\delta^2}{\sigma^2} \cdot B$ (since information cost is upper bounded by communication cost). Let $S$ be the sign of $V$. Also let $\delta^2 = 10 R$. Then since square loss of the estimator $\htheta(Y)$ is $R$, using $Y$, one can predict $S$ w.p. $1/2 + \Omega(1)$ (with the predictor $\text{sign}(\htheta(Y))$). Hence $I(Y; S) \ge \Omega(1)$ (e.g. by Fano's inequality), which implies $I(Y; V) \ge \Omega(1)$ (by data processing).  Hence $\frac{n\delta^2}{\sigma^2} \cdot B \ge \Omega(1)$, which implies $R \ge \Omega\left(\frac{\sigma^2 }{n B }\right)$.
\end{proof}

\noindent The proof of theorem \ref{improved_lb_dim} is an easy application of the direct sum theorem (theorem \ref{direct_sum}), lemma \ref{info_decay}, and arguments similar to the proof of theorem \ref{improved_lb}, so we skip it.

\section{Information Theory Inequalities}
\begin{proposition}\label{prop:ind}
If random variables $\vecX_1,\dots,\vecX_d$ are independent conditioned on the random variable $\vectheta$, then for any random variable $Y$, we have, 
\begin{align*}
 \sum_{i=1}^{d} I(\vecX_i; Y \mid \vectheta) \le I(\vecX_1 \dots \vecX_d ; Y \mid \vectheta) 
\end{align*}
\end{proposition}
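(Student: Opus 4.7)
The plan is to expand the right-hand side using the chain rule for conditional mutual information and then show each resulting term dominates the corresponding term on the left, the step where conditional independence of the $\vecX_i$'s given $\vectheta$ is used.

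First, I would apply the chain rule to write
\[
I(\vecX_1\dots\vecX_d;Y\mid\vectheta) \;=\; \sum_{i=1}^{d} I(\vecX_i;Y\mid\vectheta,\vecX_1,\dots,\vecX_{i-1}).
\]
So it suffices to prove the termwise inequality $I(\vecX_i;Y\mid\vectheta,\vecX_1,\dots,\vecX_{i-1}) \ge I(\vecX_i;Y\mid\vectheta)$ for every $i$.

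For the termwise step, I would introduce the joint mutual information $I(\vecX_i;(Y,\vecX_1,\dots,\vecX_{i-1})\mid\vectheta)$ and expand it via the chain rule in two different orders. Expanding with $Y$ first gives
\[
I(\vecX_i;Y\mid\vectheta) \;+\; I(\vecX_i;\vecX_1,\dots,\vecX_{i-1}\mid\vectheta,Y),
\]
and expanding with $\vecX_1,\dots,\vecX_{i-1}$ first gives
\[
I(\vecX_i;\vecX_1,\dots,\vecX_{i-1}\mid\vectheta) \;+\; I(\vecX_i;Y\mid\vectheta,\vecX_1,\dots,\vecX_{i-1}).
\]
The hypothesis that $\vecX_1,\dots,\vecX_d$ are conditionally independent given $\vectheta$ forces $I(\vecX_i;\vecX_1,\dots,\vecX_{i-1}\mid\vectheta)=0$, so equating the two expansions and dropping the nonnegative term $I(\vecX_i;\vecX_1,\dots,\vecX_{i-1}\mid\vectheta,Y)$ yields the desired termwise bound. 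Summing over $i$ and combining with the chain-rule identity above completes the proof.

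The only conceptually delicate point is recognizing that conditional independence of the $\vecX_i$'s given $\vectheta$ does \emph{not} in general imply conditional independence given $(\vectheta,Y)$, which is why the extra term $I(\vecX_i;\vecX_1,\dots,\vecX_{i-1}\mid\vectheta,Y)$ appears and why the inequality is one-sided rather than an equality. Everything else is just careful bookkeeping with the chain rule and nonnegativity of mutual information, so I do not expect a substantive obstacle.
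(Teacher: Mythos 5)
Your proof is correct and follows essentially the same route as the paper: both start with the chain rule to write $I(\vecX_1\dots\vecX_d;Y\mid\vectheta)=\sum_i I(\vecX_i;Y\mid\vectheta,\vecX_1,\dots,\vecX_{i-1})$ and then use conditional independence of the $\vecX_i$'s given $\vectheta$ to establish the termwise bound $I(\vecX_i;Y\mid\vectheta,\vecX_1,\dots,\vecX_{i-1})\ge I(\vecX_i;Y\mid\vectheta)$. The only cosmetic difference is that you obtain the termwise bound by expanding $I(\vecX_i;(Y,\vecX_1,\dots,\vecX_{i-1})\mid\vectheta)$ via the chain rule in two orders and dropping a nonnegative term, whereas the paper unrolls each term into entropy differences and invokes ``conditioning decreases entropy''; these are equivalent bookkeeping for the same argument.
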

\begin{proof}
We first use the chain rule for condition information and get 
\begin{align*}
I(\vecX ; Y \mid \vectheta) &= \sum_{i = 1}^d I(\vecX_i ; Y \mid \vectheta, \vecX_1,\ldots, \vecX_{i-1}) \\
&=  \sum_{i = 1}^d \left( H(\vecX_i \mid \vectheta, \vecX_1,\ldots, \vecX_{i-1}) - H(\vecX_i \mid Y, \vectheta, \vecX_1,\ldots, \vecX_{i-1}) \right) \\
\end{align*}
Then since  $\vecX_1, \ldots, \vecX_d$ are independent conditioned on $\vectheta$, we have $H(\vecX_i \mid \vectheta, \vecX_1,\ldots, \vecX_{i-1} = H(\vecX_i \mid \vectheta) $, and then 
\begin{align*}
I(\vecX ; Y \mid \vectheta)&= \sum_{i=1}^d \left( H(\vecX_i \mid \vectheta) - H(\vecX_i \mid Y, \vectheta, \vecX_1,\ldots, \vecX_{i-1})\right) \\
&\ge \sum_{i=1}^d \left( H(\vecX_i \mid \vectheta) - H(\vecX_i \mid Y, \vectheta)\right) \\
&= \sum_{i=1}^{d} I(\vecX_i; Y \mid \vectheta)
\end{align*}
where the inequality follows from the fact that conditioning decreases entropy. 
\end{proof}

\end{document}